\documentclass[11pt,a4paper]{article}
\usepackage[utf8]{inputenc}
\usepackage[T1]{fontenc}
\usepackage{fullpage}
\usepackage{caption,subcaption,xspace,amsthm,hyperref,amsmath,amssymb,mathtools,algorithm,algpseudocode}
\newtheorem{theorem}{Theorem}
\newtheorem{lemma}{Lemma}
\newtheorem{remark}{Remark}
\newtheorem{corollary}{Corollary}
\newtheorem{definition}{Definition}

\usepackage[normalem]{ulem}
\usepackage{xcolor}
\usepackage{tikz}
\usetikzlibrary{positioning,matrix,fit,decorations.pathreplacing}

\tikzset{every node/.style={inner sep=0, outer sep=0},
    cmat/.style={matrix of nodes, execute at empty cell={\node[text=black!60!white] {0};}, left delimiter={[}, right delimiter={]}, minimum height=12pt, text width=12pt, text depth=0.5ex, align=center}}

\usepackage[backend=biber,style=numeric,sorting=none,maxbibnames=99]{biblatex}
\usepackage{booktabs}
\usepackage{color}

\usepackage[noblocks]{authblk}
\DeclareMathOperator*{\argmin}{arg\,min}

\newcommand{\E}{\mathbb{E}}
\newcommand{\Prob}{\mathbf{Pr}}

\newcommand{\MC}{\textsc{MC}\xspace}
\newcommand{\RH}{\textsc{RH}\xspace}
\newcommand{\HC}{\textsc{HC}\xspace}

\title{Robust Non-Clairvoyant Scheduling with Classification Models}

\author[1]{Anthony Dugois}
\author[1]{Vincent Fagnon}
\author[2]{Giorgio Lucarelli}

\affil[1]{Univ.\ Marie et Louis Pasteur, CNRS, institut FEMTO-ST, F-25000 Besançon, France}
\affil[ ]{{\small\texttt{\{anthony.dugois,vincent.fagnon\}@umlp.fr}}}
\affil[2]{Université de Lorraine, LCOMS, Metz, France}
\affil[ ]{{\small\texttt{giorgio.lucarelli@univ-lorraine.fr}}}

\date{}

\begin{document}

\maketitle

\begin{abstract}
We study the classical single-machine scheduling problem of minimizing the sum of completion times of jobs in a non-clairvoyant setting, where the processing time of each job remains unknown until its completion. This is a hard problem for which no constant competitive algorithm is possible. Inspired by robust optimization and learning-augmented algorithms, we introduce a novel robustness framework that leverages structural information provided by a classification model to overcome this limitation. Specifically, we assume that jobs are partitioned into classes and we have access to the confusion matrix of the classifier, whose entry $(k,\ell)$ indicates the number of jobs predicted to belong to class~$k$ but that actually belong to class~$\ell$. In this manner, we are able to characterize uncertainty as a set of permutations within each predicted class, rather than as a collection of discrete numerical scenarios, avoiding the computational difficulty of classical robust metrics, such as Min-Max and Min-Max Regret. In addition to these worst-case metrics, we also consider the expected objective over all scenarios. We first propose an optimal non-adaptive strategy that is oblivious with respect to all three robust criteria. We then investigate adaptive and randomized algorithms, showing that they can outperform the optimal non-adaptive strategy when the matrix exhibits particular structural properties.

\newcommand{\keywords}[1]{\par\bigskip\noindent\textbf{Keywords:} #1 \par\medskip}
\keywords{Scheduling, Robustness, Non-Clairvoyant, Classification, Confusion Matrix}
\end{abstract}

\newpage

\section{Introduction}

Scheduling problems, where one seeks to schedule a set of jobs on a set of resources in order to optimize a given objective function, are known to be particularly sensitive to uncertainty. The various applications of scheduling are often subject to different sources of variability, making the perfect knowledge of input parameters extremely challenging. Typically, it is common that the exact processing time of a task cannot be known with full precision before its completion. As a consequence, algorithms leveraging this information are more difficult to use in practice. Several approaches have been proposed to deal with this uncertainty, among which online/non-clairvoyant optimization~\cite{becchetti2004nonclairvoyant,kalyanasundaram2003minimizing,motwani1994nonclairvoyant}, stochastic optimization~\cite{im2015stochastic,megow2006models,mohring1999approximation,schulz2008stochastic,skutella2014stochastic}, robust optimization~\cite{kasperski2016single,mastrolilli2013single,shabtay2023stateoftheart}, or learning-augmented algorithms~\cite{azar2021flow,azar2022distortion,bampis2026non,im2023non,purohit2018improving}.

In the latter approach, one has access to an oracle that predicts the values of the unknown parameters. These predictions are subject to errors, typically summarized by a single scalar \(\eta\), which represents the ``overall quality'' of the oracle on a given instance. The goal is then to design an algorithm with a formal guarantee (e.g., a competitive ratio) that depends on \(\eta\). The oracle is often seen as a black box, in the sense that no assumptions are made about its behavior. However, in many practical applications, predictions may exhibit specific patterns that could be exploited by optimization algorithms. For example, an expert may specialize in recognizing a particular class of items, providing near-perfect predictions for the corresponding parameters while producing less accurate predictions for other types of items. Similarly, an oracle may be guaranteed never to overestimate (or underestimate) parameters values. Unfortunately, the single-scalar error representation used in the classical learning-augmented framework is generally too coarse to capture such structured prediction patterns in optimization problems.

In this paper, we introduce a novel robustness analysis framework based on \emph{classification models}, which assign objects to discrete categories. If we note \(X\subseteq\mathbb{R}^n\) the features describing the objects to classify and \(Y=\{c_1,c_2,\ldots,c_K\}\) a finite set of \(K\) class labels (representing the possible categories for our objects), a \emph{classifier} is a function \(\mathcal{C}:X\to Y\) mapping each vector \(x\in X\) to a class label \(y\in Y\). Such classifiers may be obtained through (un)supervised machine learning methods (e.g., Support Vector Machines, Decision Trees, Neural Networks, k-Nearest Neighbors, etc.), probabilistic data structures (e.g., Bloom Filters, Count-Min Sketch, etc.), or represent a human expert predicting the category of an object from simple observation. What makes classifiers interesting in our context is that they are typically evaluated through their \emph{confusion matrix}, which is obtained by running the model on the testing set. For a classifier \(\mathcal{C}\) on \(K\) classes, a confusion matrix is a \(K\times K\) matrix \(E_{\mathcal{C}}\) where each entry \(e_{ij}\) indicates the number of items of class \(c_j\) that \(\mathcal{C}\) predicted as \(c_i\). In other words, the rows of \(E_{\mathcal{C}}\) correspond to the predicted classes, while the columns correspond to the true classes of the items\footnote{Depending on the definition, the predictions may appear in the columns instead of the rows of the confusion matrix; everything is equivalent by taking the transpose of \(E_{\mathcal{C}}\).}. Fig.~\ref{fig.cm} illustrates the confusion matrix of a binary classifier.

\begin{figure}[htbp]
    \centering
    \begin{tikzpicture}
        \pgfdeclarelayer{bg}
        \pgfsetlayers{bg,main}

        \matrix[
            cmat,
            minimum height=18pt,
            text width=18pt
        ] (mat) {
            2 & 1 \\
            4 & 5 \\
        };

        \begin{pgfonlayer}{bg}
            \node[fit=(mat-1-1)(mat-2-1),fill=red,fill opacity=0.2] {};
            \node[fit=(mat-2-1)(mat-2-2),fill=blue,fill opacity=0.2] {};
        \end{pgfonlayer}

        \foreach\i in {1,...,2}{
            \node[above=6pt of mat-1-\i] (t-c\i) {\(c_\i\)};
            \node[left=10pt of mat-\i-1] (p-c\i) {\(c_\i\)};
        }

        \node[fit=(t-c1)(t-c2)] (true) {};
        \node[fit=(p-c1)(p-c2)] (pred) {};

        \node[above=6pt of true.north,font=\footnotesize\bfseries] {Actual};
        \node[left=6pt of pred.west,anchor=south,rotate=90,font=\footnotesize\bfseries] {Predicted};
    \end{tikzpicture}
    \caption{Confusion matrix of a binary classifier (2 classes). The entry on row \(i\) and column \(j\) indicates the number of items of class \(c_j\) predicted as \(c_i\). The entries highlighted in red (first column) represent the 6 items actually in class \(c_1\) (among which 2 are correctly classified and 4 are misclassified in \(c_2\)). Those highlighted in blue (second row) represent the 9 items classified as \(c_2\) (among which 4 are actually in \(c_1\)).}
    \label{fig.cm}
\end{figure}

Interestingly, it is possible to infer the performance of a classifier on unlabeled data~\cite{bialek2026estimating,kivimaki2026performance}, and in particular to estimate its confusion matrix. We therefore propose to leverage such confusion matrices in scheduling problems. Suppose we are given a set of jobs partitioned into \(K\) classes, where all jobs within the same class have identical processing times. The objective is to schedule these jobs \emph{non-preemptively} on a single machine so as to minimize the sum of completion times. In the clairvoyant setting, it is well-known that the Shortest Processing Time (SPT) policy is optimal~\cite{pinedo2012scheduling}. In the non-clairvoyant setting, the class of each job is unknown, and since preemption is not allowed, the best strategy is simply to execute the jobs in an arbitrary order~\cite{JagerSWW25}. The worst case outcome is then a Longest Processing Time (LPT) schedule.

Now suppose we have access to a classifier \(\mathcal{C}\) that predicts the class of each job, and that its confusion matrix on the problem instance is known in advance. For example, consider the two-class setting illustrated in Fig.~\ref{fig.cm}. Although we do not know whether any particular job is small or large, among the jobs that \(\mathcal{C}\) predicts to be large (the blue entries in Fig.~\ref{fig.cm}), we do know how many are actually small and how many are actually large. This additional information makes it possible to decide whether the classifier should be trusted, while always avoiding the worst-case LPT schedule. This naturally raises the question of whether the information captured by the confusion matrix can be exploited to guarantee the robustness of scheduling algorithms. The remainder of this paper is devoted to answering this question by developing a framework that explicitly leverages confusion matrices.

\section{Model Definition and Contributions}
\label{sec:model}

We revisit the famous \(1||\sum C_j\) scheduling problem. We are given a set \(\mathcal{J}=\{1,2,\ldots,n\}\) of \(n\) jobs, partitioned into \(K\) disjoint subsets \(\mathcal{J}_1,\mathcal{J}_2,\ldots,\mathcal{J}_K\). We say that jobs in \(\mathcal{J}_k\) belong to class \(k\). The \emph{processing time} of a job \(j\) is noted \(p_j\in\mathbb{Q}_{\ge 0}\). All jobs of a given class \(k\) have identical processing time \(\lambda_k\), and we assume that \(\lambda_1<\lambda_2<\cdots<\lambda_K\). The objective is to schedule these jobs \emph{non-preemptively} on a single machine in such a way that the sum \(\sum_{j\in\mathcal{J}}C_j\) is minimized, where \(C_j\) denotes the \emph{completion time} of job \(j\).

\medskip\noindent\textbf{Knowledge model.} We consider that we do not know the class of a job in advance: this information is revealed only at full completion of the job. However, we have access to a classifier \(\mathcal{C}\) that is able to \emph{predict} the class of a job before executing it. Each request to the classifier is free, and it is deterministic in the sense that it always gives the same answer for a given job. The classifier is not perfect and may respond with the wrong class. Although we do not have any information for each individual job, we consider that, for any instance of the problem, we know in advance the \emph{confusion matrix} \(E_{\mathcal{C}}\) of the classifier \(\mathcal{C}\), which consists in a \(K\times K\) integer matrix where each entry \(e_{k\ell}\) on row \(k\) and column \(\ell\) gives the number of jobs that the classifier believes to be in class \(k\) whereas they are in reality in class \(\ell\). For each \(k\), we note \(\mathcal{R}_k\) the set of jobs that the classifier believes to be in class \(k\); the number of jobs in \(\mathcal{R}_k\) is equal to the sum of entries in row \(k\) of the confusion matrix \(E_{\mathcal{C}}\). We also have \(e_{k\ell}=|\mathcal{R}_k\cap\mathcal{J}_\ell|\) for all \(k,\ell\). We assume that we systematically begin by executing the classifier on each job (as requesting the classifier is free) in order to build the sets \(\mathcal{R}_k\). Again, we do not know the class of each individual job, but we do know the distribution of each set \(\mathcal{R}_k\) thanks to the a priori knowledge of the confusion matrix. This knowledge may seem to be a strong assumption; however, recent approaches in the field of Machine Learning have demonstrated the possibility to estimate the performance of a trained classifier on unlabelled data, even in the case of distributional shift~\cite{bialek2026estimating,kivimaki2026performance}. In particular, it is possible to provide an estimation of confusion matrices. In this paper, we propose to evaluate the gain that a \emph{perfect} ``unsupervised performance estimation'' technique would permit on scheduling algorithms, and we leave the analysis of inaccurate confusion matrices for future work.

Interestingly, this knowledge model can be seen as a generalization of the classical clairvoyant and non-clairvoyant models. We say that a classifier \(\mathcal{C}\) is \emph{clairvoyant} if and only if each row of its confusion matrix \(E_{\mathcal{C}}\) has at most one nonzero entry. Indeed, since all jobs within a given class share the same processing time, knowing that all jobs in \(\mathcal{R}_k\) belong to a single true class \(\ell\) is sufficient to determine the processing time of each job prior to its execution. On the opposite side, any classifier \(\mathcal{C}\) whose confusion matrix \(E_{\mathcal{C}}\) has only one row with nonzero entries does not help at all, as it will predict that all jobs belong to the same class. Hence, we say that such a classifier is \emph{non-clairvoyant}. All the other classifiers are in between of these two extremes. Fig.~\ref{fig.cm-examples} gives several examples of matrices.

\begin{figure}[htbp]
    \centering
    \begin{subfigure}[t]{0.2\textwidth}
        \centering
        \begin{tikzpicture}
            \matrix[cmat] (mat) {
                2 &   &   \\
                  & 4 &   \\
                  &   & 3 \\
            };
        \end{tikzpicture}
        \caption{}
    \end{subfigure}\hfill
    \begin{subfigure}[t]{0.2\textwidth}
        \centering
        \begin{tikzpicture}
            \matrix[cmat] (mat) {
                  &   & 2 \\
                4 &   &   \\
                  & 3 &   \\
            };
        \end{tikzpicture}
        \caption{}
    \end{subfigure}\hfill
    \begin{subfigure}[t]{0.2\textwidth}
        \centering
        \begin{tikzpicture}
            \matrix[cmat] (mat) {
                2 &   &   \\
                1 & 1 &   \\
                2 & 1 & 3 \\
            };
        \end{tikzpicture}
        \caption{}
    \end{subfigure}\hfill
    \begin{subfigure}[t]{0.2\textwidth}
        \centering
        \begin{tikzpicture}
            \matrix[cmat] (mat) {
                1 &   & 1 \\
                  & 4 &   \\
                1 &   & 2 \\
            };
        \end{tikzpicture}
        \caption{}
    \end{subfigure}\hfill
    \begin{subfigure}[t]{0.2\textwidth}
        \centering
        \begin{tikzpicture}
            \matrix[cmat] (mat) {
                2 & 4 & 3 \\
                  &   &   \\
                  &   &   \\
            };
        \end{tikzpicture}
        \caption{}
    \end{subfigure}
    \caption{Examples of confusion matrices. (a) and (b) are both examples of clairvoyant classifiers. Although the classifier in (b) is wrong about the interpretation of the classes, the confusion matrix allows to infer the reality; for example, we immediately know that all the jobs labelled as class 2 are actually in class 1. (c) shows an example of a classifier that only over-estimates the jobs, and (d) is an example of a classifier that confuses class 1 and class 3 but is always correct for class 2. Finally, (e) shows a non-clairvoyant classifier.}
    \label{fig.cm-examples}
\end{figure}

In the following, we will sometimes restrict to the special case of \emph{binary classifiers}, that is to say, \(2\times 2\) confusion matrices. In this case, to simplify notations, we note \(s_k\) (resp.\ \(L_k\)) the number of small (resp.\ large) jobs in \(\mathcal{R}_k\), and we note \(n_s=s_1+s_2\) (resp.\ \(n_L=L_1+L_2\)) the total number of small (resp.\ large) jobs in the instance. Moreover, we note \(\lambda_s\) (resp.\ \(\lambda_L\)) the size of small (resp.\ large) jobs.

\medskip\noindent\textbf{Uncertainty model.} While the classifier allows to partition \(\mathcal{J}\) into \(K\) subsets \(\mathcal{R}_1,\mathcal{R}_2,\ldots,\mathcal{R}_K\), the permutation of the jobs in each \(\mathcal{R}_k\) remains completely unknown. In other words, the uncertainty of the problem is on the \emph{ordering} of each set \(\mathcal{R}_k\), and we do not make any assumption about it. We argue that, from a worst-case point of view, it seems completely unfair to compare to the optimal strategy of the clairvoyant case, which consists in scheduling jobs in non-decreasing order of processing times (the SPT policy): our decisions can always be subject to an adversary giving us large jobs first, whereas the optimal clairvoyant algorithm is never constrained by this malicious ordering. Hence, in the following of this paper, we compare to an optimal solution that is also subject to the fixed ordering of jobs in each set \(\mathcal{R}_k\). Our objective is to ensure robustness to the uncertain ordering of these sets.

In these settings, the sets \(\mathcal{R}_k\) may be seen as \emph{queues} from which an algorithm must pull jobs: at each decision step, the job that is in front of the chosen queue is immediately executed. We say that a \textbf{scenario} \(\sigma\) is a tuple of \(K\) permutations (also called \emph{chains}) \(\sigma_k\), \(1 \leq k \leq K\), where \(\sigma_k\) describes a fixed ordering of the queue \(\mathcal{R}_k\). The job that is in position \(j\) in the chain \(\sigma_k\) is denoted by \(\sigma_k[j]\), and for any \(1\le j_1\le j_2\le |\mathcal{R}_k|\), we note \(\sigma_k[j_1:j_2]\) the subchain of jobs \(\sigma_k[j_1],\sigma_k[j_1+1],\ldots,\sigma_k[j_2]\). Let \(\mathcal{S}\) denote the set of all possible scenarios.

We define a \textbf{sequence} \(r\) to be an ordered list \(r=(k_1,k_2,\ldots,k_n)\) of queues \(\mathcal{R}_k\) from which pulling and executing jobs. Note that a given queue may be chosen several times in a sequence, unless it contains a single job or is empty. For example, the sequence \((1,1,2,2,2,1,1,2,2)\) means that we execute the 2 first jobs of \(\mathcal{R}_1\), then the 3 first jobs of \(\mathcal{R}_2\), then the 2 next jobs of \(\mathcal{R}_1\), and finally the 2 next jobs of \(\mathcal{R}_2\). The queue that is chosen at step \(t\) in sequence \(r\) is noted \(r[t]\), and we note \(R\) the set of all possible sequences. An algorithm \(\mathcal{A}\) taking a scenario \(\sigma\) as input and returning a sequence \(r=\mathcal{A}(\sigma)\) is called a \textbf{sequencing algorithm}. Note that an optimal strategy is defined by a sequence optimizing some criterion, respecting however the actual scenario as for any sequencing algorithm.

The realization of a sequence \(r\) on a scenario \(\sigma\) generates the \textbf{schedule} \(\pi^r_{\sigma}\), which gives a permutation of the jobs, i.e., the \(j\)-th job in the schedule is denoted by \(\pi^r_{\sigma}[j]\). Note that this schedule defines a partition of each chain \(\sigma_k\) into subchains \(\sigma_k[1:j_1],\sigma_k[j_1+1:j_2],\sigma_k[j_2+1:j_3],\ldots\), such that the jobs of a given subchain are consecutive in \(\pi^r_\sigma\), and such that two subchains coming from the same chain \(\sigma_k\) are never consecutive in \(\pi^r_\sigma\). We note \(\mathcal{P}^r_{\sigma_k}\) the ordered set of subchains of \(\sigma_k\) induced by the realization of the sequence \(r\) on scenario \(\sigma\).

\medskip\noindent\textbf{Robustness criteria.} For any given sequence \(r\) and scenario \(\sigma\), the sum of completion times of the schedule \(\pi^r_{\sigma}\) is defined as \(\tilde{C}^r_{\sigma}=\sum_{j=1}^n (n-j+1)p_{\pi^r_{\sigma}[j]}\). By slightly abusing the notation, \(\pi^{\mathcal{A}}_\sigma\) (resp.\ \(\tilde{C}^{\mathcal{A}}_\sigma\)) denotes the schedule (resp.\ objective value) obtained by executing algorithm \(\mathcal{A}\) on scenario \(\sigma\). The sequence that minimizes the objective for a given scenario \(\sigma\) is called the \textbf{\(\sigma\)-optimal sequence} and is noted \(r^*_\sigma\). The corresponding optimal objective value is simply noted \(\tilde{C}^*_{\sigma}\). We can immediately observe that when the scenario \(\sigma\) is known, it is straightforward to compute \(r^*_\sigma\), as this minimization problem is a special case of \(1|\text{chains}|\sum C_j\). Indeed, one can view the fixed ordering of jobs of a given queue \(\mathcal{R}_k\) as a chain of precedence relationships. This problem is polynomially solvable, as described in Theorem 4-2 of~\cite{conway1967theory}: compute the average processing time of each prefix of each chain \(\sigma_k\) and schedule completely the prefix of lowest cost; repeat (ignoring already scheduled jobs) until there is no remaining job in the chains. We call this algorithm \textsc{Chain-Opt} (see Appendix~\ref{sec.chain-opt} for details), and we assume that the given solution always maximizes the number of jobs scheduled at each step.

We propose to study algorithmic robustness for the uncertain scheduling problem (where the permutations of the queues \(\mathcal{R}_k\) are unknown) under different settings. In robust optimization with multiple scenarios, it is common to study the so-called Min-Max and Min-Max Regret problems, where the former seeks to find the solution whose worst-case scenario gives the lowest objective value, while the latter seeks to find the solution whose worst-case scenario gives the lowest absolute deviation from the optimal. Note that our problem differs from the classical framework in the nature of the scenarios: they are usually defined as sets of numeric values, representing the uncertain parameters, while our scenarios consist in the set of all possible permutations of the queues \(\mathcal{R}_k\). More formally, we respectively define the \textsc{Min-Max} and \textsc{Min-Max Regret} problems as following:
\begin{equation}
    \label{pb.min-max}
    \tag{\textsc{Min-Max}}
    \min_{\mathcal{A}}\max_{\sigma\in\mathcal{S}}\left\{\tilde{C}^{\mathcal{A}}_{\sigma}\right\},
\end{equation}
\begin{equation}
    \label{pb.min-max-regret}
    \tag{\textsc{Min-Max Regret}}
    \min_{\mathcal{A}}\max_{\sigma\in\mathcal{S}}\left\{\tilde{C}^{\mathcal{A}}_{\sigma}-\tilde{C}^*_\sigma\right\},
\end{equation}
where the minimum is taken over a different set of sequencing algorithms \(\mathcal{A}\) according to the situation. Indeed, we study two kinds of algorithms in the following: \textbf{non-adaptive} algorithms \(\mathsf{NA}\), which must return the full sequence before any execution of the jobs, and \textbf{adaptive} algorithms \(\mathsf{A}\supset\mathsf{NA}\), which may adapt their choices during the execution of the schedule. While the former are easier to analyze, the latter make a malicious adversary less powerful. We also consider \textbf{randomized} algorithms, which are allowed to randomly sample a sequence, and we adapt the metric accordingly, i.e., the \textsc{Min-Max Regret} problem becomes
\begin{equation}
    \label{pb.min-max-regret-rand}
    \tag{\textsc{Randomized Min-Max Regret}}
    \min_{\mathcal{A}}\max_{\sigma\in\mathcal{S}}\left\{\E\left[\tilde{C}^{\mathcal{A}}_{\sigma}\right]-\tilde{C}^*_\sigma\right\},
\end{equation}
where the expectation is taken over the choices of algorithm \(\mathcal{A}\). Finally, we also consider in similar settings the \textsc{Min-Expectation} problem, which seeks to minimize the expected sum of completion times when each scenario \(\sigma\) has probability \(\Prob(\sigma)\) to appear:
\begin{equation}
    \label{pb.min-expectation}
    \tag{\textsc{Min-Expectation}}
    \min_{\mathcal{A}}\E\left[\tilde{C}^{\mathcal{A}}\right]
    =\min_{\mathcal{A}}\sum_{\sigma\in\mathcal{S}}\Prob(\sigma)\tilde{C}^{\mathcal{A}}_{\sigma}.
\end{equation}
Unless specified otherwise, we assume that all scenarios are equiprobable, i.e., \(\Prob(\sigma)=1/|\mathcal{S}|\).

\medskip\noindent\textbf{Contributions and organization.} The paper is organized as follows. Section~\ref{sec.related-work} is devoted to related work. In Section~\ref{sec.non-adaptive}, we give a very simple algorithm, called \textsc{Min-Cost}, and we prove that it is the best possible \emph{non-adaptive} algorithm for the three considered robustness criteria (Theorems~\ref{th.min-max}-\ref{th.min-max-regret-lb}). Interestingly, we demonstrate that adaptive or randomized algorithms cannot do better for the \textsc{Min-Max} problem (Lemma~\ref{lem.min-max-dec-scenario}). We also show that, for binary classifiers, no deterministic adaptive algorithm can perform better than \textsc{Min-Cost} for the \textsc{Min-Max Regret} problem (Lemma~\ref{lem.min-max-regret-lb}). However, we prove in Section~\ref{sec.beyond-non-adaptive} that for \textsc{Min-Max Regret} and binary classifiers, a simple randomized algorithm outperforms \textsc{Min-Cost} for specific structures in the confusion matrix (Theorem~\ref{th.min-max-regret-rand}). Moreover, for \textsc{Min-Expectation}, we give an optimal adaptive dynamic program, which is polynomial when the number of classes \(K\) and the entries of the confusion matrix are bounded above by a constant (Theorem~\ref{thm:dp-optimal}). Finally, by giving a specific family of classifiers, we show that the relative gap between \textsc{Min-Cost} (i.e., optimal non-adaptive) and this optimal adaptive algorithm is asymptotically bigger than $4/3$ (Theorem~\ref{th:nonadapt-vs-adapt}).

\section{Related Work}
\label{sec.related-work}

In the offline case where all jobs are fully known in advance, $1||\sum C_j$ can be solved in polynomial time by applying the Shortest Processing Time (SPT) policy~\cite{pinedo2012scheduling}.
In the online case where the jobs are revealed according to an adversarial order (online over list), the LPT order corresponds to a worst-case scenario for any no-idling algorithm. Moreover, in the case where the instance is composed by only two classes of jobs with processing times $\lambda_s$, $\lambda_L$, with $\lambda_s<\lambda_L$, considering a random ordering leads to an expected competitive ratio of $O\left(\sqrt{\lambda_L/\lambda_s}\right)$~\cite{mitzenmacher2020scheduling}.
In the online case where the jobs arrive over time, Hoogeveen and Vestjens~\cite{HoogeveenV96} showed that no deterministic algorithm can have a competitive ratio better than 2, and they proposed the delayed-SPT algorithm which attains this ratio.

In the non-clairvoyant model where the processing time of each job becomes known only at its completion, no (randomized) algorithm can achieve a constant competitive ratio~\cite{JagerSWW25}.
In order to deal with non-clairvoyant data, several works consider the learning-augmented setting, where non-necessarily accurate predictions are provided.
Kumar et al.~\cite{purohit2018improving} were the first to study the sum of completion times objective when preemption of jobs is permitted and predictions on the processing times are available, and they proposed an algorithm being robust (bounded performance for any quality of predictions) and consistent (good performance on accurate predictions).
Since then, a series of papers has appeared, mainly handling the preemptive case~\cite{BampisDKLP22,benomar2026nonclairvoyant,im2023non,WeiZ20/nips}.
Closer to our setting but still in the preemptive case, Lindermayr and Megow~\cite{LindermayrM25} consider predictions on jobs permutations implying a priority order among the jobs. Our framework differs in that the uncertainty is captured by a full confusion matrix rather than a single priority order.

Another way to circumvent the non-clairvoyant model is to treat processing times as random variables whose distributions (or at least their first moments) are known in advance. Then, the objective is to minimize the expected sum of completion times \(\E\left[\sum C_j\right]\). On a single machine, the Shortest Expected Processing Time (SEPT) policy, which executes jobs in non-decreasing order of \(\E[p_j]\), is optimal, and this is also the case on parallel machines if processing times follow an exponential distribution~\cite{pinedo2012scheduling}. For general distributions, the approximation ratio of SEPT is \(\Omega(n^{1/4})\)~\cite{im2015stochastic}, but is almost linear in the squared coefficient of variation of processing times, which indicates that SEPT performs well if there is not much variability among jobs~\cite{mohring1999approximation}. The dependence on this squared coefficient of variation is also characteristic of several proposed LP-based strategies~\cite{mohring1999approximation,skutella2014stochastic}. Other works consider the online stochastic model, in which the processing times are random variables and jobs stay unknown before their arrival in the system~\cite{megow2006models,schulz2008stochastic}.

The problem $1||\sum C_j$ has also been studied in the classical robust optimization setting~\cite{KouvelisY97}.
In order to represent the uncertain information, in the \textit{discrete uncertainty} model, a finite set of different realization scenarios for the processing times of the jobs is available, while in the \textit{interval uncertainty} model, only a lower and an upper bound of them are provided.
In both cases, commonly used metrics include Min-Max, Min-Max Regret and Min-Max Relative Regret, similarly to the metrics used in our work. All of them seek to guarantee a robust solution, that is a solution with good performance even in the worst-case realization of the input parameters.
Yang and Yu~\cite{YangY02} have proved that $1||\sum C_j$ under discrete uncertainty is NP-hard for all the three metrics.
For the Min-Max criterion, Mastrolilli et al.~\cite{mastrolilli2013single} have shown that it is NP-hard to approximate it within a factor less than $6/5$, and they give a $2$-approximation algorithm for the weighted case with precedence constraints.
Moreover, they propose a dynamic programming algorithm running in polynomial time if the number of scenarios and the values of processing times are bounded by a constant.
When considering interval uncertainty, both the Min-Max Regret~\cite{LebedevA06} and the Min-Max Relative Regret~\cite{Averbakh05} are known to be NP-hard.
Note that our work introduces a new model to represent uncertain data, which is of a completely different nature with respect to the known ones.

Kahan~\cite{Kahan91} proposed a model in which an oracle is queried to reveal the real values of some uncertain data at a given cost.
Adapting this idea in scheduling problems, D\"urr et al.~\cite{DurrEMM20} consider the query to be an additional job to be executed on the same machines as the other jobs.
Querying a job is decided by the scheduler and may lead to a reduction of its processing time.
Without a query, the processing time of a job is equal to a given upper bound.
In this context, they propose deterministic and randomized results in different settings of $1||\sum C_j$.
Moreover, adaptive and non-adaptive algorithms in this model are proposed in \cite{DufosseDNTV20} for instances consisting of two types of jobs.
Finally, Bampis et al.~\cite{BampisE0PX25} proposed a probabilistic prediction model for permutation problems, which tries to limit the number of predictions used to optimize an objective. The authors showed that their model also applies to $1||\sum C_j$ for which any no-idling solution corresponds to a permutation of jobs.

\section{An Oblivious Non-Adaptive Algorithm}
\label{sec.non-adaptive}

We begin by studying a non-adaptive sequencing algorithm called \textsc{Min-Cost} (\MC), which schedules each \(\mathcal{R}_k\) completely in non-decreasing order of average processing time
\[
    \bar{p}_k=\frac{1}{|\mathcal{R}_k|}\sum_{j\in\mathcal{R}_k}p_j=\frac{1}{|\mathcal{R}_k|}\sum_{\ell=1}^K e_{k\ell}\lambda_\ell.
\]
To make the analysis easier, we assume in the following that the rows of any confusion matrix are arranged in non-decreasing order of \(\bar{p}_k\): if this is not the case, it suffices to rename the rows accordingly. This way, \textsc{Min-Cost} simply consists in executing \(\mathcal{R}_1\) completely, then \(\mathcal{R}_2\), and so on.

We will show that \textsc{Min-Cost} is the best possible \emph{deterministic} algorithm for the \textsc{Min-Max} and \textsc{Min-Max Regret} problems (Theorems~\ref{th.min-max}, \ref{th.min-max-regret-ub} and \ref{th.min-max-regret-lb} in Sections~\ref{sec.min-max} and \ref{sec.min-max-regret}), and that it is the best possible \emph{non-adaptive} algorithm for the \textsc{Min-Expectation} problem (Theorem~\ref{th.min-expectation} in Section~\ref{sec.min-expectation}). In this sense, we call \textsc{Min-Cost} an \emph{oblivious} non-adaptive algorithm, as it solves three robust problems at once for the class of algorithms \(\mathsf{NA}\).

\subsection{\textsc{Min-Max} problem}
\label{sec.min-max}

\begin{theorem}
    \label{th.min-max}
    \textsc{Min-Cost} is an optimal algorithm for the \textsc{Min-Max} problem.
\end{theorem}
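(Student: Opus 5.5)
The plan is to show two things. First, the worst-case value of \textsc{Min-Cost} equals the objective of a specific ``decreasing'' scenario. Second, on that same scenario every algorithm, adaptive or randomized, is forced to pay at least as much. Let \(\sigma^{\downarrow}\) be the scenario in which every chain \(\sigma_k\) lists the jobs of \(\mathcal{R}_k\) in non-increasing order of processing time, i.e.\ LPT inside each queue.

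\textbf{Upper bound for \textsc{Min-Cost}.} \MC always executes the queues as whole blocks in the order \(\mathcal{R}_1,\dots,\mathcal{R}_K\), so only the internal order of each block depends on \(\sigma\). Write the objective as a sum over blocks. The part due to jobs of later blocks waiting behind \(\mathcal{R}_k\) is \(|\mathcal{R}_k|\bar{p}_k\) times the number of later jobs, which is independent of \(\sigma\). The remaining part is the internal sum of completion times of block \(k\), which is maximized by LPT within the block. Hence \(\max_{\sigma}\tilde{C}^{\MC}_\sigma=\tilde{C}^{\MC}_{\sigma^\downarrow}\).

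\textbf{Lower bound for any algorithm.} I will show \(\tilde{C}^{\mathcal{A}}_{\sigma^\downarrow}\ge\tilde{C}^{\MC}_{\sigma^\downarrow}\) for every algorithm \(\mathcal{A}\). Since \(\sigma^\downarrow\) is a single fixed scenario, an adaptive algorithm run on it just produces some sequence \(r\), so it suffices to show \(\tilde{C}^{*}_{\sigma^\downarrow}=\tilde{C}^{\MC}_{\sigma^\downarrow}\). For randomized algorithms the expectation is an average over sequences, each of which pays at least \(\tilde{C}^*_{\sigma^\downarrow}\).

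To compute \(\tilde{C}^{*}_{\sigma^\downarrow}\) I use \textsc{Chain-Opt}. In a non-increasing chain, the prefix averages are non-increasing in the prefix length, so the minimum-average prefix is the whole remaining chain, with average equal to the chain average. The first step of \textsc{Chain-Opt}, taking the maximum-length minimizing prefix, therefore schedules an entire queue with smallest \(\bar p_k\), namely \(\mathcal{R}_1\). The other chains are untouched, so the same argument repeats, and \textsc{Chain-Opt} outputs exactly the \MC sequence.

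If I prefer to avoid relying on the details of \textsc{Chain-Opt}, I would instead use an exchange argument. Suppose two consecutive subchains come from different queues and the second has smaller average. Swapping them does not increase the cost, by Smith's ratio rule applied to blocks. Merging non-adjacent subchains of the same decreasing chain also does not hurt, since the leftover tail of a decreasing chain has average at most that of its earlier parts.

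\textbf{Main obstacle.} The step needing the most care is this optimality of whole-queue blocks on \(\sigma^\downarrow\), in particular handling ties and the interleaving of later subchains. I would handle it via the prefix-average monotonicity described above. Combining the two parts gives \(\max_\sigma \tilde{C}^{\mathcal{A}}_\sigma\ge \tilde{C}^{\mathcal{A}}_{\sigma^\downarrow}\ge \tilde{C}^{\MC}_{\sigma^\downarrow}=\max_\sigma\tilde{C}^{\MC}_\sigma\), which proves the theorem.
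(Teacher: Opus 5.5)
Your proof is correct and follows essentially the paper's route: you take the within-queue non-increasing scenario (the paper's \texttt{DEC}) as the worst case for \textsc{Min-Cost}, and you use \textsc{Chain-Opt} on it to show that the \texttt{DEC}-optimal sequence is exactly \textsc{Min-Cost}. The differences are minor. You prove that \texttt{DEC} is worst-case only for \textsc{Min-Cost}, via block decomposition and rearrangement, rather than for every sequence as in Lemma~\ref{lem.min-max-dec-scenario}; this suffices. Your explicit chain $\max_\sigma \tilde{C}^{\mathcal{A}}_\sigma\ge\tilde{C}^{*}_{\texttt{DEC}}=\max_\sigma\tilde{C}^{\MC}_\sigma$ also handles adaptive and randomized algorithms more cleanly than the paper's closing remark.
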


To prove Theorem~\ref{th.min-max}, we state a simple lemma showing that the worst scenario for any possible solution consists in arranging all queues \(\mathcal{R}_k\) in non-increasing order of processing times.

\begin{lemma}
    \label{lem.min-max-dec-scenario}
    Let \(\texttt{DEC}\) be the scenario where each queue \(\mathcal{R}_k\) is arranged in non-increasing order of processing times. Then, the scenario \(\texttt{DEC}\) maximizes the objective for any sequence \(r\).
\end{lemma}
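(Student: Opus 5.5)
Fix an arbitrary sequence \(r\) and write the objective of \(\pi^r_\sigma\) as a sum of positional weights. Since \(\tilde{C}^r_\sigma=\sum_{j=1}^n (n-j+1)p_{\pi^r_\sigma[j]}\), position \(t\) of the schedule carries the weight \(w_t=n-t+1\). These weights are strictly decreasing in \(t\). The key remark is that the sequence \(r\) alone fixes which positions of the schedule are filled by which queue: the \(i\)-th time \(r\) selects queue \(k\), at some step \(t_{k,i}\), the job \(\sigma_k[i]\) runs at position \(t_{k,i}\). This holds for every scenario \(\sigma\), because a non-adaptive sequence does not depend on \(\sigma\). Moreover \(t_{k,1}<t_{k,2}<\cdots<t_{k,|\mathcal{R}_k|}\). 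Hence
\[
\tilde{C}^r_\sigma=\sum_{k=1}^K\sum_{i=1}^{|\mathcal{R}_k|} w_{t_{k,i}}\,p_{\sigma_k[i]},
\]
so the objective splits into \(K\) independent terms, one per queue, and each term depends only on the chain \(\sigma_k\).

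Maximizing over \(\mathcal{S}\) therefore reduces to maximizing each inner sum separately over the permutations \(\sigma_k\) of \(\mathcal{R}_k\), with the decreasing weight vector \((w_{t_{k,1}},\dots,w_{t_{k,|\mathcal{R}_k|}})\) held fixed. By the rearrangement inequality, or a direct exchange argument, such a sum is maximized when the largest processing times are paired with the largest weights. Concretely, suppose \(p_{\sigma_k[i]}<p_{\sigma_k[i+1]}\) for some \(i\). Swapping the two jobs changes the sum by \((w_{t_{k,i}}-w_{t_{k,i+1}})(p_{\sigma_k[i+1]}-p_{\sigma_k[i]})\ge 0\). Repeating such swaps yields the non-increasing order, which is exactly \(\sigma_k=\texttt{DEC}_k\). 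Since this holds for every \(k\), \(\texttt{DEC}\) maximizes \(\tilde{C}^r_\sigma\) for every sequence \(r\). Ties among equal processing times do not matter, since all non-increasing arrangements give the same value.

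The main obstacle is making the decoupling statement rigorous: the map from (\(r\), queue \(k\), index \(i\)) to a schedule position must not depend on \(\sigma\). This is immediate for a fixed sequence \(r\), which is what the lemma quantifies over. For the remark in the contributions about adaptive and randomized algorithms, one needs a further argument. An adaptive algorithm facing \(\texttt{DEC}\) follows some sequence \(r\), and its cost there is at least \(\min_r \tilde{C}^r_{\texttt{DEC}}\). That minimum is at least the min-max value, because for each \(r\) the lemma gives \(\tilde{C}^r_{\texttt{DEC}}=\max_\sigma\tilde{C}^r_\sigma\). This shows adaptivity cannot help. A randomized algorithm is a distribution over such adaptive strategies, so the same bound holds in expectation. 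I would only sketch this extension, since the lemma itself concerns a fixed sequence.
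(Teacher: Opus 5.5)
Your proof is correct and is essentially the paper's argument: both fix the sequence \(r\), use that \(r\) alone determines which schedule positions each queue's jobs occupy, and apply adjacent swaps within a queue (which exchange only two positions, with weights \(w_{t_{k,i}}>w_{t_{k,i+1}}\)) to reach \texttt{DEC} without decreasing the objective. Your explicit positional-weight decomposition is slightly cleaner than the paper's version, and so is starting from an arbitrary scenario rather than a maximizing one; the paper's wording calls the swap a strict increase applied to an already-maximizing scenario, which is a little muddled.
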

\begin{proof}
    Let \(r\) be an arbitrary sequence and \(\hat{\sigma}\) a scenario that maximizes the objective of \(r\). Suppose there exists a queue \(\mathcal{R}_k\) that is not arranged in non-increasing order of processing times, i.e., there exists a position \(1\le j<|\mathcal{R}_k|\) such that \(p_{\hat{\sigma}_k[j]}<p_{\hat{\sigma}_k[j+1]}\). These two jobs \(\hat{\sigma}_k[j]\) and \(\hat{\sigma}_k[j+1]\) are respectively scheduled during steps \(t_1\) and \(t_2\), with \(t_1<t_2\). As they are consecutive in \(\mathcal{R}_k\), we have \(r[t_1]=r[t_2]=k\) and \(r[\tau]\ne k\) for all \(t_1<\tau<t_2\). Thus, swapping jobs \(\hat{\sigma}_k[j]\) and \(\hat{\sigma}_k[j+1]\) in \(\hat{\sigma}_k\) will also swap them in the schedule \(\pi^r_{\hat{\sigma}}\) without altering the position of the other jobs, and this operation will necessarily increase the sum of completion times. As \(\hat{\sigma}\) was chosen to maximize this objective, we get a distinct scenario that also maximizes it. By repeatedly swapping consecutive unordered jobs in each set \(\mathcal{R}_k\), we obtain the scenario \(\texttt{DEC}\).
\end{proof}

According to this lemma, it suffices to build the \(\texttt{DEC}\)-optimal sequence to solve the \textsc{Min-Max} problem. The solution provided by \textsc{Chain-Opt} for \(\texttt{DEC}\) is in fact equivalent to scheduling the queues \(\mathcal{R}_k\) in non-decreasing order of average processing time, which is exactly the \textsc{Min-Cost} procedure.

\begin{proof}[Theorem~\ref{th.min-max}]
    Apply \textsc{Chain-Opt} on \(\texttt{DEC}\). As the jobs are ordered in non-increasing order of processing times in each \(\mathcal{R}_k\), this implies that the average processing time of the full queue is smaller than the average processing time of any shorter prefix of this queue. Thus, \textsc{Chain-Opt} schedules queues in non-decreasing order of average processing time \(\bar{p}_k\), exactly like \textsc{Min-Cost}. Note that Lemma~\ref{lem.min-max-dec-scenario} applies to any solution, no matter how the sequence is built, which proves that no adaptive or randomized algorithm can do better than \textsc{Min-Cost}.
\end{proof}

\subsection{\textsc{Min-Expectation} problem}
\label{sec.min-expectation}

For non-adaptive algorithms and when all scenarios are equiprobable, the \textsc{Min-Expectation} problem is straightforward to solve by computing the expected processing time of each job.

\begin{theorem}
    \label{th.min-expectation}
    If all scenarios are equiprobable, \textsc{Min-Cost} is an optimal non-adaptive algorithm for the \textsc{Min-Expectation} problem.
\end{theorem}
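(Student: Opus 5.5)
The plan is to reduce the expected objective of any fixed sequence to a deterministic scheduling problem whose optimum is given by a Smith-type exchange argument. Fix a non-adaptive sequence \(r\in R\). Since \(r\) is fixed before execution, the queue \(r[t]\) pulled at step \(t\) does not depend on \(\sigma\). By linearity of expectation,
\[
\E\bigl[\tilde{C}^r\bigr]=\sum_{t=1}^n (n-t+1)\,\E\bigl[p_{\pi^r_\sigma[t]}\bigr].
\]
If step \(t\) pulls the \(j\)-th job of queue \(k=r[t]\), then \(\pi^r_\sigma[t]=\sigma_k[j]\). Under equiprobable scenarios, \(\sigma_k\) is a uniform random permutation of \(\mathcal{R}_k\), so \(\sigma_k[j]\) is uniformly distributed over \(\mathcal{R}_k\). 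Hence \(\E[p_{\sigma_k[j]}]=\bar{p}_k\), independently of \(j\).

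This gives the identity
\[
\E\bigl[\tilde{C}^r\bigr]=\sum_{t=1}^n (n-t+1)\,\bar{p}_{r[t]}.
\]
This is exactly the objective of the deterministic instance in which every job of \(\mathcal{R}_k\) has processing time \(\bar{p}_k\), scheduled in the order dictated by \(r\). Every sequence \(r\) corresponds to an ordering of these \(n\) deterministic jobs, and conversely every ordering is realized by some sequence. The reason is that jobs within a queue are interchangeable: only the label of the queue matters at each step.

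Minimizing over \(\mathsf{NA}\) therefore reduces to solving \(1||\sum C_j\) on this deterministic instance. By the optimality of SPT, or by a direct adjacent-exchange argument, an optimal ordering executes the jobs in non-decreasing order of \(\bar{p}_k\). Since the rows are sorted by \(\bar{p}_k\), this means executing \(\mathcal{R}_1\) fully, then \(\mathcal{R}_2\), and so on, which is exactly the \textsc{Min-Cost} sequence. Ties between equal \(\bar{p}_k\) do not affect the value.

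The only step needing care is the claim that \(\sigma_k[j]\) is uniform over \(\mathcal{R}_k\). One must note that \(\mathcal{S}\) is a product of permutation sets, so the uniform distribution on \(\mathcal{S}\) induces a uniform marginal on each \(\sigma_k\). Each position of a uniform permutation is uniform, and the paper's sum \(\frac{1}{|\mathcal{R}_k|}\sum_\ell e_{k\ell}\lambda_\ell\) matches this expectation. Non-adaptivity is essential here: an adaptive algorithm could make \(r[t]\) depend on revealed jobs, which breaks the linearity step above.
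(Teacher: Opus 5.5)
Your proposal is correct and follows essentially the same route as the paper: linearity of expectation over positions, the observation that a non-adaptive sequence fixes the queue used at each position so the expected processing time there is \(\bar{p}_k\), and then optimality of ordering by non-decreasing expected processing time. The paper cites SEPT (Theorem 10.1.1 in Pinedo) where you use SPT on the deterministic surrogate instance, and you spell out why \(\sigma_k[j]\) is uniform over \(\mathcal{R}_k\), which the paper leaves implicit.
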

\begin{proof}
    The \textsc{Min-Expectation} problem can be rewritten as
    \begin{align*}
        \min_{\mathcal{A}}\sum_{\sigma\in\mathcal{S}}\Prob(\sigma)\tilde{C}^{\mathcal{A}}_{\sigma}
            &=\min_{\mathcal{A}}\sum_{\sigma\in\mathcal{S}}\Prob(\sigma)\left(\sum_{j=1}^{n}(n-j+1)p_{\pi^{\mathcal{A}}_\sigma[j]}\right) \\
            &=\min_{\mathcal{A}}\sum_{j=1}^{n}(n-j+1)\left(\sum_{\sigma\in\mathcal{S}}\Prob(\sigma)p_{\pi^{\mathcal{A}}_\sigma[j]}\right),
    \end{align*}
    where the term \(\sum_{\sigma\in\mathcal{S}}\Prob(\sigma)p_{\pi^{\mathcal{A}}_\sigma[j]}\) is the expected processing time of the \(j\)-th job in the schedule built by algorithm \(\mathcal{A}\) over all scenarios. Clearly, if \(\mathcal{A}\) is a non-adaptive algorithm, the job \(\pi^{\mathcal{A}}_\sigma[j]\) always comes from the same queue \(\mathcal{R}_k\), no matter the scenario. Thus, if all scenarios are equiprobable, we have
    \[
        \sum_{\sigma\in\mathcal{S}}\Prob(\sigma)p_{\pi^{\mathcal{A}}_\sigma[j]}=\frac{1}{|\mathcal{R}_k|}\sum_{j'\in\mathcal{R}_k}p_{j'}=\bar{p}_k.
    \]
    It is well-known that the optimal solution consists in scheduling jobs in non-decreasing order of expected processing time (see for example Theorem~10.1.1 in~\cite{pinedo2012scheduling}), which is exactly \textsc{Min-Cost} in this case.
\end{proof}

\subsection{\textsc{Min-Max Regret} problem}
\label{sec.min-max-regret}

The \textsc{Min-Max Regret} problem is more challenging to analyze. First, we derive an upper bound for the maximum regret of \textsc{Min-Cost}. Then, we derive a matching lower bound for the maximum regret of any deterministic algorithm in the special case of 2 classes of jobs (binary classifiers). This shows that \textsc{Min-Cost} is the best possible deterministic algorithm for the \textsc{Min-Max Regret} problem on 2 classes, and we conjecture that this result holds for an arbitrary number of classes.

\begin{theorem}
    \label{th.min-max-regret-ub}
    For any instance, we have
    \[
        \max_{\sigma\in\mathcal{S}}\left\{\tilde{C}^{\MC}_{\sigma}-\tilde{C}^*_\sigma\right\}\le\sum_{1\le k<\ell\le K}(\lambda_\ell-\lambda_k)\sum_{1\le i<j\le K}e_{jk}e_{i\ell}.
    \]

    \noindent In particular, \(\max_{\sigma\in\mathcal{S}}\left\{\tilde{C}^{\MC}_{\sigma}-\tilde{C}^*_\sigma\right\}\le s_2 L_1(\lambda_L-\lambda_s)\) for any binary classifier.
\end{theorem}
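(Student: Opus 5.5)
Fix an arbitrary scenario \(\sigma\) and let \(r^*_\sigma\) be the \(\sigma\)-optimal sequence. The plan is to compare \(\pi^{\MC}_\sigma\) with the clairvoyant SPT schedule rather than with \(\pi^{r^*}_\sigma\) directly. We will use the standard pairwise decomposition of the objective: for any permutation \(\pi\) of the jobs,
\[
    \sum_j C_j=\sum_j p_j+\sum_{\{a,b\}}\min\text{-or-}\max\text{ contribution},
\]
more precisely \(\sum_j C_j=\sum_j p_j+\sum_{a\prec_\pi b}p_a\). Hence the difference between two permutations is a sum over the pairs \(\{a,b\}\) that they order differently. For each such pair, the contribution is \(\pm(p_b-p_a)\).

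The key step is the following. Since \(\tilde C^*_\sigma\) is the objective of some permutation \(\pi^{r^*}_\sigma\), we have
\[
    \tilde C^{\MC}_\sigma-\tilde C^*_\sigma=\sum_{\{a,b\}\text{ ordered differently}}\bigl(\text{sign}\bigr)(p_b-p_a).
\]
We then bound this from above by discarding the pairs where \(\MC\) is the better of the two. Only pairs on which \(\MC\) puts a strictly larger job before a strictly smaller one, while \(\pi^{r^*}_\sigma\) does the opposite, can contribute positively, and each such pair contributes exactly \(\lambda_\ell-\lambda_k\).

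Now we analyze which pairs can be \emph{inverted} by \(\MC\), meaning a job of class \(\ell\) is placed before a job of class \(k<\ell\). There are two cases.

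\textbf{Same queue.} If both jobs lie in the same queue \(\mathcal{R}_i\), then every sequence (in particular \(r^*_\sigma\)) respects the order of \(\sigma_i\). So the pair is ordered identically in both schedules and contributes nothing.

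\textbf{Different queues.} If the jobs lie in different queues \(\mathcal{R}_i\) and \(\mathcal{R}_j\) with \(i<j\) (recall that rows are sorted by \(\bar p\)), then \(\MC\) places every job of \(\mathcal{R}_i\) before every job of \(\mathcal{R}_j\). Such a pair is inverted exactly when the class-\(\ell\) job is in \(\mathcal{R}_i\) and the class-\(k\) job is in \(\mathcal{R}_j\). The number of such pairs is \(e_{i\ell}e_{jk}\), independently of \(\sigma\).

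Summing over \(k<\ell\) and \(i<j\) gives exactly the claimed bound, valid for every \(\sigma\), hence also for the maximum. For a binary classifier, the only term is \(k=s\), \(\ell=L\), \(i=1\), \(j=2\), which gives \(e_{2s}e_{1L}=s_2L_1\), multiplied by \(\lambda_L-\lambda_s\).

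The main point to get right is the pair-exchange identity. We must check that the regret is bounded by the inverted pairs of \(\MC\) alone: pairs where \(\MC\) is correct but \(r^*_\sigma\) is wrong only decrease the difference. We must also check that same-queue pairs never differ between the two schedules. Both follow from writing each objective as \(\sum_j p_j+\sum_{a\text{ before }b}p_a\) and subtracting term by term over unordered pairs.
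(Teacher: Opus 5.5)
Your proof is correct, and it takes a different route from the paper's. The paper first proves a structural lemma: swapping an adjacent decreasing pair inside a sub-chain induced by the sequence can only increase the regret. From this it deduces that \texttt{INC} is a worst-case scenario for \MC. It then notes that \textsc{Chain-Opt} returns the SPT schedule on \texttt{INC}, and computes the regret on \texttt{INC} exactly by tracking how the segments $\mathcal{R}_i\cap\mathcal{J}_\ell$ move.

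You instead bound the regret for every scenario at once through the identity $\sum_j C_j=\sum_j p_j+\sum_{a\prec b}p_a$. You need only two facts: $\pi^{r^*_\sigma}_\sigma$ agrees with \MC on all same-queue pairs, and the cross-queue pairs that \MC inverts are determined by the matrix alone. This makes the argument shorter and self-contained, with no worst-case lemma and no analysis of \textsc{Chain-Opt}. It also makes the index bookkeeping transparent: every pair with $i<j$ and $k<\ell$ is counted. One example is $\mathcal{R}_2\cap\mathcal{J}_2$ against $\mathcal{R}_3\cap\mathcal{J}_1$. The paper's per-segment accounting, as written, is restricted to segments $e_{k\ell}$ with $k<\ell$ and so skips this pair, even though its final formula contains the corresponding term.

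What the paper's route gives in addition is tightness: the bound equals the regret of \MC on \texttt{INC}, so it is the exact maximum regret. You can recover this within your framework. On \texttt{INC} the optimum is SPT, which reverses every cross-queue inversion of \MC and never orders a larger job before a smaller one, so every term of your sum is attained.

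One expository fix: your opening sentence says you compare with the clairvoyant SPT schedule rather than with $\pi^{r^*_\sigma}_\sigma$. That is the opposite of what you then do. Comparing with SPT for a general $\sigma$ would also pick up the $\sigma$-dependent inversions inside each queue, and would not by itself give the stated bound.
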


\begin{theorem}
    \label{th.min-max-regret-lb}
    For binary classifiers, we have \(\min_{\mathcal{A}\in\mathsf{A}}\max_{\sigma\in\mathcal{S}}\left\{\tilde{C}^{\mathcal{A}}_{\sigma}-\tilde{C}^*_\sigma\right\}\ge s_2 L_1(\lambda_L-\lambda_s)\).
\end{theorem}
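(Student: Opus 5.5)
The proof will be an adversary argument against an arbitrary deterministic adaptive algorithm \(\mathcal{A}\in\mathsf{A}\). A deterministic adaptive algorithm only learns the type of a job when it completes it. So the adversary may fix the scenario \(\sigma\) online: each time \(\mathcal{A}\) pulls from \(\mathcal{R}_k\), it decides whether the revealed job is small or large, subject only to the counts \(s_k,L_k\) of the confusion matrix. Since \(\mathcal{A}\) is deterministic, the transcript that results is a single fixed scenario \(\sigma\in\mathcal{S}\), and it suffices to show \(\tilde{C}^{\mathcal{A}}_\sigma-\tilde{C}^*_\sigma\ge s_2L_1(\lambda_L-\lambda_s)\) for it. The quantity \(s_2L_1\) is the number of pairs (large job of \(\mathcal{R}_1\), small job of \(\mathcal{R}_2\)). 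The goal is therefore to charge \(\lambda_L-\lambda_s\) to every such pair.

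The basic tool is the pairwise formula \(\tilde{C}=\sum_j p_j+\sum_{\{i,j\}} \min(p_i,p_j)\) over unordered pairs, plus \(\lambda_L-\lambda_s\) for every inverted pair, i.e.\ a large job scheduled before a small one. Thus for two schedules of the same job set, the difference of objectives is exactly \((\lambda_L-\lambda_s)\) times the difference of their numbers of inversions. Inversions between two jobs of the same queue are identical in every schedule realizing \(\sigma\), because the chain order is fixed. Hence only cross-queue pairs matter, and the regret equals \((\lambda_L-\lambda_s)\) times (cross inversions of \(\mathcal{A}\) minus cross inversions of the \(\sigma\)-optimal sequence).

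The adversary I would use is the following. Whenever \(\mathcal{A}\) pulls from \(\mathcal{R}_1\), it reveals a large job while large jobs of \(\mathcal{R}_1\) remain. Whenever \(\mathcal{A}\) pulls from \(\mathcal{R}_2\), it reveals a small job while small jobs remain, unless this would let \(\mathcal{A}\) avoid an inversion. More precisely, the adversary should keep the smalls of \(\mathcal{R}_2\) "behind" the larges of \(\mathcal{R}_2\) as long as \(\mathcal{A}\) has not yet exhausted the larges of \(\mathcal{R}_1\). For the resulting \(\sigma\), I then compare \(\mathcal{A}\) with the specific competitor sequence "first \(\mathcal{R}_2\)'s prefix up to its small jobs, then the rest" or its symmetric version, and choose whichever is better. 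This avoids computing \(\tilde{C}^*_\sigma\) exactly, since \(\tilde{C}^*_\sigma\) is at most the cost of any fixed sequence. For each (large of \(\mathcal{R}_1\), small of \(\mathcal{R}_2\)) pair, I want to show that either \(\mathcal{A}\) inverts it while the competitor does not, or the adversary's choice forced \(\mathcal{A}\) to create a compensating inversion on a (large of \(\mathcal{R}_2\), small of \(\mathcal{R}_1\)) pair that the competitor avoids. Summing gives at least \(s_2L_1\) net inversions. I would first verify the two extreme behaviours, namely \(\mathcal{A}=\) \textsc{Min-Cost} and \(\mathcal{A}\) executing \(\mathcal{R}_2\) first. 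These should give exactly \(s_2L_1\) when the competitor is taken as the reverse queue order on \(\texttt{DEC}\)-like or \(\texttt{INC}\)-like scenarios, and this fixes the tightness.

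The main obstacle is the interleaving algorithms, which alternate between queues and can exploit the information revealed so far. There, the simple rule above might give the algorithm fewer than \(s_2L_1\) net inversions, because the optimum is itself constrained by chains and its cross-inversion count depends on \(\sigma\). I would first try a potential-function argument over the time steps. The potential is the number of still-chargeable pairs, and each pull of \(\mathcal{A}\) is shown either to realize a charged inversion or to leave the potential unchanged, with the adversary's rule chosen so that the competitor's inversion count stays zero on the charged pairs. If that fails, I would fall back on a case split on the first step at which \(\mathcal{A}\) switches queues, and define the scenario by case. The case distinction would presumably use the ordering \(\bar{p}_1\le\bar{p}_2\), i.e.\ \(s_1L_2\le\) something comparable with \(s_2L_1\), to decide which competitor sequence serves as the benchmark.
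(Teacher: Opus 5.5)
Your tools are the right ones: counting large-before-small inversions, bounding $\tilde{C}^*_\sigma$ by a fixed competitor sequence, and closing a symmetric case with $s_2L_1\le s_1L_2$. But the adversary, which is the entire content of this proof, is never pinned down, and the interleaving case you call the main obstacle is left as a plan. Moreover, the rule you do state fails.

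In the version after ``more precisely'', $\mathcal{R}_1$ always shows its large jobs first, and $\mathcal{R}_2$ shows large jobs first while $\mathcal{R}_1$ still has large jobs. Take the algorithm that runs all of $\mathcal{R}_2$ and then all of $\mathcal{R}_1$. Your rule produces the scenario \texttt{DEC}, for which $\sigma_1$ followed by $\sigma_2$ is optimal. So the regret is only $(s_1L_2-s_2L_1)(\lambda_L-\lambda_s)$, which is $0$ when all four entries equal $1$.

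The other version (show $\mathcal{R}_2$'s small jobs first) also fails. It is beaten even by the non-adaptive sequence that runs $\sigma_2[1:s_2]$, then $\sigma_1$, then the rest of $\sigma_2$. That yields $\mathsf{s}^{s_2}\mathsf{L}^{L_1}\mathsf{s}^{s_1}\mathsf{L}^{L_2}$, with no cross-queue inversion at all. The escape clause ``unless this would let $\mathcal{A}$ avoid an inversion'' is exactly what would have to be defined.

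The missing idea is to treat the two queues symmetrically. Reveal a large job on every pull, from either queue, until one queue has run out of large jobs. At that moment, fix the remaining order of the \emph{other} queue as all its small jobs followed by its remaining large jobs. Then split on which queue runs out first, not on the first switch.

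If $\mathcal{R}_1$ runs out first, with $x<L_2$ large jobs already drawn from $\mathcal{R}_2$, everything executed so far is large. The interleaving is therefore irrelevant, and $\mathcal{A}$ can do no better than $\mathsf{L}^{x+L_1}\mathsf{s}^{s_1+s_2}\mathsf{L}^{L_2-x}$. The competitor running $\sigma_2[1:x+s_2]$, then $\sigma_1$, then the rest of $\sigma_2$ gives $\mathsf{L}^{x}\mathsf{s}^{s_2}\mathsf{L}^{L_1}\mathsf{s}^{s_1}\mathsf{L}^{L_2-x}$, which has exactly $s_2L_1$ fewer inversions.

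If $\mathcal{R}_2$ runs out first, the mirror competitor gains exactly $s_1L_2\ge s_2L_1$. Note that this requires $\mathcal{R}_1$'s small jobs to sit before its remaining large ones, which is precisely what your $\mathcal{R}_1$ rule forbids. With this adversary, no potential function or pairwise compensation is needed.
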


We prove the upper bound in Theorem~\ref{th.min-max-regret-ub} by characterizing the worst-case scenario for \textsc{Min-Cost} (i.e., the scenario that maximizes the regret of \textsc{Min-Cost}). Then, we compute the optimal solution for this worst-case scenario by applying \textsc{Chain-Opt}, which then allows us to derive the expression of the regret.

The following lemma shows a useful general property on the worst-case scenario of any sequence in the \textsc{Min-Max Regret} problem. A direct implication of this lemma is that the worst-case scenario for \textsc{Min-Cost} is when all queues are arranged in non-decreasing order of processing times (in contrast with the worst-case scenario in the \textsc{Min-Max} problem, which was exactly the inverse ordering).

\begin{lemma}
    \label{lem.worst-case-min-max-regret}
    For any sequence \(r\), the set of worst-case scenarios necessarily includes the scenario \(\hat{\sigma}\) where, for all chains \(\hat{\sigma}_k\), each sub-chain \(\hat{\sigma}_k[j_1:j_2]\in\mathcal{P}^r_{\hat{\sigma}_k}\) is non-decreasing.
\end{lemma}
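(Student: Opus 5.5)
We follow the exchange argument of Lemma~\ref{lem.min-max-dec-scenario}, now in the opposite direction and working on the regret \(\tilde{C}^r_\sigma-\tilde{C}^*_\sigma\) instead of \(\tilde{C}^r_\sigma\). Fix a sequence \(r\) and let \(\hat{\sigma}\) be a worst-case scenario for the regret of \(r\). We may assume that \(\hat{\sigma}\) is chosen, among all worst-case scenarios, to minimize the number of inversions inside the subchains of \(\mathcal{P}^r_{\hat{\sigma}_k}\). The partition into subchains depends only on \(r\) and the queue sizes, not on which jobs occupy the positions, so it is fixed throughout. Suppose some subchain of \(\hat{\sigma}_k\) contains two consecutive jobs \(a=\hat{\sigma}_k[j]\) and \(b=\hat{\sigma}_k[j+1]\) with \(p_a>p_b\). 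Let \(\sigma'\) be obtained by swapping them. Since they are consecutive in the same subchain, they occupy consecutive positions in \(\pi^r_{\hat{\sigma}}\). The swap therefore changes the objective of \(r\) by exactly \(-(p_a-p_b)\), so \(\tilde{C}^r_{\sigma'}=\tilde{C}^r_{\hat{\sigma}}-(p_a-p_b)\).

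The key step is to show that the optimum decreases by at least as much: \(\tilde{C}^*_{\sigma'}\le\tilde{C}^*_{\hat{\sigma}}-(p_a-p_b)\). The regret then does not decrease, so \(\sigma'\) is also worst-case and has fewer inversions. This contradicts the choice of \(\hat{\sigma}\), or alternatively we iterate until all subchains are non-decreasing. To prove the inequality, take the \(\hat{\sigma}\)-optimal sequence \(r^*_{\hat{\sigma}}\) and apply the same sequence to \(\sigma'\). It pulls from queue \(k\) at the same steps, so \(a\) and \(b\) swap their schedule positions \(t_1<t_2\) and every other job stays in place. The change in objective is \(-(n-t_1+1)(p_a-p_b)+(n-t_2+1)(p_a-p_b)=-(t_2-t_1)(p_a-p_b)\le-(p_a-p_b)\), because \(t_2-t_1\ge1\). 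Hence \(\tilde{C}^*_{\sigma'}\le\tilde{C}^{r^*_{\hat{\sigma}}}_{\sigma'}\le\tilde{C}^*_{\hat{\sigma}}-(p_a-p_b)\), and therefore
\[
\tilde{C}^r_{\sigma'}-\tilde{C}^*_{\sigma'}\ \ge\ \tilde{C}^r_{\hat{\sigma}}-\tilde{C}^*_{\hat{\sigma}}.
\]

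The main obstacle is this comparison step: we must not compare against \(r^*_{\sigma'}\) directly, but reuse the old optimal sequence as a feasible solution for the new scenario. The argument also relies on a gap asymmetry. In the schedule of \(r\) the two jobs are adjacent, so their gap is exactly \(1\). In the schedule of \(r^*_{\hat{\sigma}}\) the gap is at least \(1\), which is what gives the inequality. Finally, each swap strictly reduces the number of inversions within subchains, so the process terminates. It ends at a worst-case scenario in which every subchain of \(\mathcal{P}^r_{\hat{\sigma}_k}\) is non-decreasing, which proves the statement.
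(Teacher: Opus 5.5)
Your proposal is correct and follows the same exchange argument as the paper. You swap an adjacent decreasing pair inside a subchain, note that the cost of \(r\) drops by exactly \(p_a-p_b\), and reuse \(r^*_{\hat{\sigma}}\) as a feasible sequence on the swapped scenario to show the optimum drops by at least as much; your explicit computation \(-(t_2-t_1)(p_a-p_b)\) and inversion-count termination argument make precise the steps the paper states as ``necessarily'' and ``by successive swaps''.
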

\begin{proof}
    Let \(r\) be an arbitrary sequence, and let \(\hat{\sigma}\) be a scenario that maximizes the regret of \(r\). Suppose there is a chain \(\hat{\sigma}_k\) such that there exists a sub-chain \(\hat{\sigma}_k[j_1:j_2]\in\mathcal{P}^r_{\hat{\sigma}_k}\) that is not arranged in non-decreasing order of processing times, i.e., there exists a position \(j_1\le j<j_2\) such that \(p_{\hat{\sigma}_k[j]}>p_{\hat{\sigma}_k[j+1]}\). Consider the alternative scenario \(\hat{\sigma}'\) where jobs \(\hat{\sigma}_k[j]\) and \(\hat{\sigma}_k[j+1]\) are swapped. As these jobs are consecutive in the schedule \(\pi^r_{\hat{\sigma}}\) (by definition), we get
    \[
        \tilde{C}^r_{\hat{\sigma}'}
            =\tilde{C}^r_{\hat{\sigma}}-\left(p_{\hat{\sigma}_k[j]}-p_{\hat{\sigma}_k[j+1]}\right).
    \]
    Now let us consider the \(\hat{\sigma}\)-optimal sequence \(r^*_{\hat{\sigma}}\). If we only swap jobs \(\hat{\sigma}_k[j]\) and \(\hat{\sigma}_k[j+1]\), leaving the other jobs untouched in the schedule \(\pi^*_{\hat{\sigma}}\), we necessarily get
    \[
        \tilde{C}^{r^*_{\hat{\sigma}}}_{\hat{\sigma}'}\le \tilde{C}^*_{\hat{\sigma}}-\left(p_{\hat{\sigma}_k[j]}-p_{\hat{\sigma}_k[j+1]}\right).
    \]
    Then, as \(\tilde{C}^*_{\hat{\sigma}'}\le\tilde{C}^{r^*_{\hat{\sigma}}}_{\hat{\sigma}'}\), we conclude \(\tilde{C}^*_{\hat{\sigma}}-\tilde{C}^*_{\hat{\sigma}'}\ge p_{\hat{\sigma}_k[j]}-p_{\hat{\sigma}_k[j+1]}=\tilde{C}^r_{\hat{\sigma}}-\tilde{C}^r_{\hat{\sigma}'}\), that is, \(\tilde{C}^r_{\hat{\sigma}'}-\tilde{C}^*_{\hat{\sigma}'}\ge\tilde{C}^r_{\hat{\sigma}}-\tilde{C}^*_{\hat{\sigma}}\). In other words, swapping two consecutive jobs arranged in decreasing order of processing times in a given sub-chain can only increase the regret of \(r\). By successive swaps, the wanted result follows.
\end{proof}

As \textsc{Min-Cost} executes each queue \(\mathcal{R}_k\) completely one after the other, we necessarily have \(\mathcal{P}^{\MC}_{\hat{\sigma}_k}=\left\{\hat{\sigma}_k[1:|\mathcal{R}_k|]\right\}\) for all \(k\), which by Lemma~\ref{lem.worst-case-min-max-regret} implies that the worst-case scenario for \textsc{Min-Cost} is when each queue is non-decreasing. Let \texttt{INC} denote this scenario. It is easy to see that \textsc{Chain-Opt} will actually provide an SPT schedule for \texttt{INC}. Hence, we can directly compute the maximum regret of \textsc{Min-Cost}, which necessarily constitute an upper bound for the optimal solution of the \textsc{Min-Max Regret} problem.

\begin{proof}[Theorem~\ref{th.min-max-regret-ub}]
    Recall that we assumed \(\bar{p}_1\le\cdots\le\bar{p}_K\). Hence, for the scenario \texttt{INC}, \textsc{Min-Cost} and \textsc{Chain-Opt} will produce the following schedules, where each \(e_{k\ell}\) represents the contiguous segment of jobs \(\mathcal{R}_k\cap\mathcal{J}_\ell\):
    
    \begin{center}
        \begin{tikzpicture}[
            seg/.style={
                inner xsep=2pt,
                minimum height=15pt,
                draw
            },
            lab/.style={
                midway,
                below,
                yshift=-6pt,
                font=\scriptsize
            },
        ]
            \begin{scope}
                \node[seg,fill=blue!15!white] (e11) at (0,0) {\(e_{11}\)};
                \node[seg,right=0 of e11,fill=blue!15!white] (e12) {\(e_{12}\)};
                \node[seg,right=0 of e12,fill=blue!15!white] (e1dots) {\(\cdots\)};
                \node[seg,right=0 of e1dots,fill=blue!15!white] (e1K) {\(e_{1K}\)};
                
                \node[seg,right=4pt of e1K,fill=red!15!white] (e21) {\(e_{21}\)};
                \node[seg,right=0 of e21,fill=red!15!white] (e22) {\(e_{22}\)};
                \node[seg,right=0 of e22,fill=red!15!white] (e2dots) {\(\cdots\)};
                \node[seg,right=0 of e2dots,fill=red!15!white] (e2K) {\(e_{2K}\)};
                
                \node[seg,right=4pt of e2K] (dots) {\(\cdots\)};
                
                \node[seg,right=4pt of dots,fill=green!15!white] (eK1) {\(e_{K1}\)};
                \node[seg,right=0 of eK1,fill=green!15!white] (eK2) {\(e_{K2}\)};
                \node[seg,right=0 of eK2,fill=green!15!white] (eKdots) {\(\cdots\)};
                \node[seg,right=0 of eKdots,fill=green!15!white] (eKK) {\(e_{KK}\)};
                
                \draw[decoration={brace,mirror},decorate] ([yshift=-2pt]e11.south west) --
                    node[lab] {\(\mathcal{R}_1\)} ([yshift=-2pt]e1K.south east);
                \draw[decoration={brace,mirror},decorate] ([yshift=-2pt]e21.south west) --
                    node[lab] {\(\mathcal{R}_2\)} ([yshift=-2pt]e2K.south east);
                \draw[decoration={brace,mirror},decorate] ([yshift=-2pt]eK1.south west) --
                    node[lab] {\(\mathcal{R}_K\)} ([yshift=-2pt]eKK.south east);
                
                \node[right=12pt of eKK] {\(\pi^{\MC}_{\texttt{INC}}\)};
            \end{scope}
            \begin{scope}[yshift=-40pt]
                \node[seg,fill=blue!15!white] (e11) at (0,0) {\(e_{11}\)};
                \node[seg,right=0 of e11,fill=red!15!white] (e12) {\(e_{21}\)};
                \node[seg,right=0 of e12] (e1dots) {\(\cdots\)};
                \node[seg,right=0 of e1dots,fill=green!15!white] (e1K) {\(e_{K1}\)};
                
                \node[seg,right=4pt of e1K,fill=blue!15!white] (e21) {\(e_{12}\)};
                \node[seg,right=0 of e21,fill=red!15!white] (e22) {\(e_{22}\)};
                \node[seg,right=0 of e22] (e2dots) {\(\cdots\)};
                \node[seg,right=0 of e2dots,fill=green!15!white] (e2K) {\(e_{K2}\)};
                
                \node[seg,right=4pt of e2K] (dots) {\(\cdots\)};
                
                \node[seg,right=4pt of dots,fill=blue!15!white] (eK1) {\(e_{1K}\)};
                \node[seg,right=0 of eK1,fill=red!15!white] (eK2) {\(e_{2K}\)};
                \node[seg,right=0 of eK2] (eKdots) {\(\cdots\)};
                \node[seg,right=0 of eKdots,fill=green!15!white] (eKK) {\(e_{KK}\)};
                
                \draw[decoration={brace,mirror},decorate] ([yshift=-2pt]e11.south west) --
                    node[lab] {\(\mathcal{J}_1\)} ([yshift=-2pt]e1K.south east);
                \draw[decoration={brace,mirror},decorate] ([yshift=-2pt]e21.south west) --
                    node[lab] {\(\mathcal{J}_2\)} ([yshift=-2pt]e2K.south east);
                \draw[decoration={brace,mirror},decorate] ([yshift=-2pt]eK1.south west) --
                    node[lab] {\(\mathcal{J}_K\)} ([yshift=-2pt]eKK.south east);
                
                \node[right=12pt of eKK] {\(\pi^*_{\texttt{INC}}\)};
            \end{scope}
        \end{tikzpicture}
    \end{center}
    
    When transitioning from \(\pi^{\MC}_{\texttt{INC}}\) to \(\pi^*_{\texttt{INC}}\), we see that, for each \(k,\ell\) such that \(1\le k<\ell\le K\), the \(e_{k\ell}\) jobs \(\mathcal{R}_k\cap\mathcal{J}_\ell\) are moved after the \(e_{ij}\) jobs \(\mathcal{R}_i\cap\mathcal{J}_j\) for each \(k<i\le K\) and \(1\le j<\ell\). Hence, the contribution of each segment \(e_{k\ell}\) to \(\tilde{C}^{\MC}_{\texttt{INC}}-\tilde{C}^*_{\texttt{INC}}\) is \(e_{k\ell}\sum_{j=1}^{\ell-1}(\lambda_\ell-\lambda_j)\sum_{i=k+1}^K e_{ij}\). Summing over \(1\le k<\ell\le K\) and rearranging the terms yields
    \[
        \tilde{C}^{\MC}_{\texttt{INC}}-\tilde{C}^*_{\texttt{INC}}
            =\sum_{\mathclap{1\le k<\ell\le K}}\; (\lambda_\ell-\lambda_k)
                \sum_{\mathclap{1\le i<j\le K}}\; e_{jk}e_{i\ell}.
    \]
    By Lemma~\ref{lem.worst-case-min-max-regret}, \texttt{INC} is the worst-case scenario for \textsc{Min-Cost}, hence the theorem follows.
\end{proof}

For the lower bound, Theorem~\ref{th.min-max-regret-lb} is a direct consequence of the following lemma.

\begin{lemma}
    \label{lem.min-max-regret-lb}
    For any binary classifier and any deterministic algorithm \(\mathcal{A}\), there exists a scenario \(\hat{\sigma}\) such that \mbox{\(\tilde{C}^{\mathcal{A}}_{\hat{\sigma}}-\tilde{C}^*_{\hat{\sigma}}\ge s_2L_1(\lambda_L-\lambda_s)\)}.
\end{lemma}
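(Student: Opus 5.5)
The plan is an adaptive adversary argument. Since \(\mathcal{A}\) is deterministic, the adversary may fix the class of each job only when \(\mathcal{A}\) pulls it from a queue, as long as each queue \(\mathcal{R}_k\) ends up with exactly \(s_k\) small and \(L_k\) large jobs. The resulting scenario \(\hat{\sigma}\) is then fully determined, and \(\mathcal{A}\) produces the same sequence on \(\hat{\sigma}\) as in the simulated run. Throughout, write \(\Delta=\lambda_L-\lambda_s\) and recall the convention \(\bar{p}_1\le\bar{p}_2\). For two queues this is equivalent to \(L_1s_2\le L_2s_1\), i.e., \(L_1/(s_1+L_1)\le L_2/(s_2+L_2)\). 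This inequality is what will make the target bound \(s_2L_1\Delta\) the smaller of the two possible ``crossing'' penalties, \(s_2L_1\Delta\) and \(s_1L_2\Delta\).

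One point fixes how the adversary must be built. Plain \texttt{INC} is useless: once \(\hat{\sigma}\) is \texttt{INC}, the sequence ``smalls of \(\mathcal{R}_1\), smalls of \(\mathcal{R}_2\), then the larges'' realises SPT and has zero regret, and an adaptive \(\mathcal{A}\) can find it. So the adversary has to react to \(\mathcal{A}\)'s switches, in the spirit of Lemma~\ref{lem.worst-case-min-max-regret}. The rule I would use is as follows.
\begin{itemize}
\item While \(\mathcal{A}\) keeps pulling from the same queue, serve small jobs first in that run, making each executed sub-chain non-decreasing.
\item Whenever \(\mathcal{A}\) switches to the other queue, serve large jobs at the front of the new run as long as they are available.
\end{itemize}
Together these make every switch look, in hindsight, like the wrong decision. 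Concretely, along any run I would track the pairs (large job executed, small job executed later) that the hindsight optimum would reorder. By the exchange computation in the proof of Lemma~\ref{lem.worst-case-min-max-regret}, the regret is \(\Delta\) times the number of such pairs that \textsc{Chain-Opt} on \(\hat{\sigma}\) manages to avoid.

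The first step is the two extreme behaviours. If \(\mathcal{A}\) never interleaves, i.e., it behaves like \textsc{Min-Cost} or like the reversed order, the adversary's answers make \(\hat{\sigma}\) equal to \texttt{INC}. The computation in Theorem~\ref{th.min-max-regret-ub} then gives regret \(s_2L_1\Delta\) or \(s_1L_2\Delta\), and both are at least \(s_2L_1\Delta\). The second step is the general case, by induction on the number of switches of \(\mathcal{A}\). The aim is to show that each switch of \(\mathcal{A}\), answered with larges, creates in \(\hat{\sigma}\) a block structure that \textsc{Chain-Opt} can exploit. That is, the hindsight optimum gains at least as much, relative to \(\mathcal{A}\), as \(\mathcal{A}\) saved by not scheduling a whole large block of \(\mathcal{R}_1\) before the smalls of \(\mathcal{R}_2\). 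I would formalise this as a potential: the number of still-unresolved pairs (large job of \(\mathcal{R}_1\), small job of \(\mathcal{R}_2\)), weighted against the (small job of \(\mathcal{R}_1\), large job of \(\mathcal{R}_2\)) pairs, using \(L_1s_2\le L_2s_1\) to show that the potential never drops below \(s_2L_1\).

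The main obstacle is this induction step: choosing the adversary's response at a switch so that the inequality holds in every case. This is hardest when one queue has run out of large or small jobs, because the adversary then has no freedom left. If the potential argument fails, I would try a two-scenario argument instead: let \(t\) be the first step at which \(\mathcal{A}\) departs from the \textsc{Min-Cost} sequence, which is determined by the answers so far. Then pick, between ``remaining jobs of \(\mathcal{R}_1\) large-first, \(\mathcal{R}_2\) small-first'' and the symmetric completion, the one punishing the departure, and compute both regrets directly via \textsc{Chain-Opt}.
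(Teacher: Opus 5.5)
Your adaptive-adversary framework is the right one, but there is a genuine gap. The general case is never proved: you leave the induction on switches and the potential argument open. More seriously, the adversary rule it rests on cannot work, because serving small jobs first within a run gives an adaptive algorithm exactly the information it needs. Write $\mathsf{s}^b$ and $\mathsf{L}^a$ for blocks of $b$ small and $a$ large jobs, and $\Delta=\lambda_L-\lambda_s$.

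Let $\mathcal{A}$ pull from $\mathcal{R}_1$ until it has seen $s_1$ small jobs. Under your rule this first run is $\mathsf{s}^{s_1}$, so $\mathcal{A}$ now knows that the rest of $\mathcal{R}_1$ is $\mathsf{L}^{L_1}$. It then executes all of $\mathcal{R}_2$ and afterwards these large jobs. A small prefix of a chain can be moved to the front, and a large suffix to the end, without increasing the cost. Hence $\mathsf{s}^{s_1}\,\hat\sigma_2\,\mathsf{L}^{L_1}$ is optimal for every order $\hat\sigma_2$, and the regret is $0$.

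Your fallback fails too. Suppose $\mathcal{A}$ pulls from $\mathcal{R}_2$ at step~1. If it sees a small job, it drains the small jobs of $\mathcal{R}_2$, runs all of $\mathcal{R}_1$, then finishes $\mathcal{R}_2$. If it sees a large job, it drains the small jobs of $\mathcal{R}_1$, runs the rest of $\mathcal{R}_2$, then finishes $\mathcal{R}_1$. Its regret is $0$ on your first completion and $s_1\Delta$ on the symmetric one. With $s_1=1$, $L_1=10$, $s_2=10$, $L_2=200$ (so $s_2L_1\le s_1L_2$), this is far below $s_2L_1\Delta=100\Delta$.

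The missing idea is to answer \emph{large}, not small: revealing large jobs keeps $\mathcal{A}$ uninformed while forcing it to commit. As in the paper, declare every pulled job large, from either queue, until one queue has no large job left. Suppose $\mathcal{R}_1$ runs out first, with $x<L_2$ large jobs already taken from $\mathcal{R}_2$. Then $\mathcal{A}$ has executed $L_1+x$ large jobs before any small one, so its cost is at least that of $\mathsf{L}^{x+L_1}\mathsf{s}^{s_1+s_2}\mathsf{L}^{L_2-x}$. Now fix the rest of $\mathcal{R}_2$ as $\mathsf{s}^{s_2}\mathsf{L}^{L_2-x}$. The feasible schedule $\mathsf{L}^{x}\mathsf{s}^{s_2}\mathsf{L}^{L_1}\mathsf{s}^{s_1}\mathsf{L}^{L_2-x}$ is then cheaper by exactly $s_2L_1\Delta$. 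The symmetric case gives $s_1L_2\Delta\ge s_2L_1\Delta$, since $\bar p_1\le\bar p_2$.

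Comparing against one explicit feasible schedule removes any need for an induction on switches, a potential function, or a \textsc{Chain-Opt} computation. The non-decreasing sub-chains of Lemma~\ref{lem.worst-case-min-max-regret} describe the worst case for a \emph{fixed} sequence, not for an algorithm that reacts to what it observes.
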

\begin{proof}
Any deterministic algorithm \(\mathcal{A}\) must schedule the \(s_1+L_1\) jobs of \(\mathcal{R}_1\) and the \(s_2+L_2\) jobs of \(\mathcal{R}_2\), and its only degree of freedom is to decide at each step which queue to draw from. We construct an adversarial scenario as follows: whenever \(\mathcal{A}\) draws a job from any queue \(\mathcal{R}_k\), the scenario reveals it to be large, until one of the two queues has exhausted all its large jobs.

First, assume the queue \(\mathcal{R}_1\) exhausts its large jobs before \(\mathcal{R}_2\), i.e., \(\mathcal{A}\) has drawn \(x<L_2\) large jobs from \(\mathcal{R}_2\) when all \(L_1\) large jobs of \(\mathcal{R}_1\) have been scheduled. At this point, the remaining jobs are \(s_1\) small jobs in \(\mathcal{R}_1\), and \(s_2\) small and \(L_2-x\) large jobs in \(\mathcal{R}_2\). Furthermore, the adversarial scenario fixes the rest of internal order of \(\mathcal{R}_2\) (the first \(x\) jobs have already been decided large): the \(s_2\) small jobs, followed by the remaining \(L_2-x\) large jobs. 

After the reveal of the \(L_1\) large jobs of \(\mathcal{R}_1\), the best thing that \(\mathcal{A}\) could do is to schedule all remaining jobs of \(\mathcal{R}_1\) (known as small, as the first \(L_1\) jobs are large), then finish \(\mathcal{R}_2\). This yields the schedule \(\underbrace{\mathsf{L}\cdots\mathsf{L}}_{x+L_1}\ \underbrace{\mathsf{s}\cdots\mathsf{s}}_{s_1} \ \underbrace{\mathsf{s}\cdots\mathsf{s}}_{s_2}\ \underbrace{\mathsf{L}\cdots\mathsf{L}}_{L_2-x}\), where \(\mathsf{L}\) (resp.\ \(\mathsf{s}\)) denotes a large (resp.\ small) job. However, consider the alternative schedule that first processes the \(x+s_2\) jobs of \(\mathcal{R}_2\), then schedules all jobs from \(\mathcal{R}_1\), and finally the remaining \(L_2-x\) jobs from \(\mathcal{R}_2\). This yields \(\underbrace{\mathsf{L}\cdots\mathsf{L}}_{x}\ \underbrace{\mathsf{s}\cdots\mathsf{s}}_{s_2} \ \underbrace{\mathsf{L}\cdots\mathsf{L}}_{L_1}\ \underbrace{\mathsf{s}\cdots\mathsf{s}}_{s_1}\ \underbrace{\mathsf{L}\cdots\mathsf{L}}_{L_2-x}\).
A direct computation shows that the cost difference between these two schedules is \(s_2L_1\left(\lambda_L-\lambda_S\right)\), and the difference with the optimal schedule can only be greater.

In the symmetric case, where queue \(\mathcal{R}_2\) exhausts its large jobs first, the same argument applies with queues \(\mathcal{R}_1\) and \(\mathcal{R}_2\) exchanged, yielding a cost difference of \(s_1L_2\left(\lambda_L-\lambda_S\right)\). Recall that, by assumption, we have \(\bar{p}_1\le\bar{p}_2\), that is, \(\frac{s_1\lambda_s+L_1\lambda_L}{s_1+L_1}\le\frac{s_2\lambda_s+L_2\lambda_L}{s_2+L_2}\), i.e., \(s_2L_1\le s_1L_2\). Thus, the bound also holds in this case.
\end{proof}

\section{Beyond Non-Adaptivity}
\label{sec.beyond-non-adaptive}

Although we have shown there is no hope to obtain a better solution than \textsc{Min-Cost} for the \textsc{Min-Max} problem, we demonstrate in this section that randomization helps for the \textsc{Min-Max Regret} problem. In particular, we give a randomized algorithm, \textsc{Rand-Half} (\RH), which breaks the lower bound given in Lemma~\ref{lem.min-max-regret-lb} if the confusion matrix exhibits a specific structure. Moreover, we will show in the case of the \textsc{Min-Expectation} problem that adaptive algorithms generally do better than \textsc{Min-Cost}, and we prove that the relative gap between an optimal non-adaptive algorithm and an optimal adaptive algorithm is asymptotically greater than 4/3.

\subsection{Randomization for \textsc{Min-Max Regret}}

We show that, in the case of 2 classes (binary classifiers), it is possible to break the lower bound for deterministic algorithms by allowing to sample a solution randomly before executing any job. We consider the \textsc{Randomized Min-Max Regret} criterion, where the adversarial scenario must be chosen without knowing anything about the random number generator of the algorithm. We analyze a randomized algorithm called \textsc{Rand-Half} (\RH), which chooses a solution randomly among 3 different sequences. We show how to set the probability of each sequence in order to minimize the value of the maximum regret of \textsc{Rand-Half}, yielding the following theorem.

\begin{theorem}
    \label{th.min-max-regret-rand}
    For any binary classifier, we have \(\displaystyle\max_{\sigma\in\mathcal{S}}\left\{\E\left[\tilde{C}^{\RH}_{\sigma}\right]-\tilde{C}^*_\sigma\right\}\le (s_2L_1-\delta)(\lambda_2-\lambda_1)\), where
    \[
        \delta=\begin{cases}
            \frac{s_2^2(L_1-s_1)^2}{L_1L_2-s_1s_2} & \text{if }s_1<L_1\text{ and }s_2\le L_2,\text{or }s_1\le L_1\text{ and }s_2<L_2, \\[4pt]
            \frac{L_1^2(s_2-L_2)^2}{s_1s_2-L_1L_2} & \text{if }L_1<s_1\text{ and }L_2\le s_2,\text{or }L_1\le s_1\text{ and }L_2<s_2, \\[4pt]
            0 & \text{otherwise.} \\
        \end{cases}
    \]
\end{theorem}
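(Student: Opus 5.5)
The plan is to define \textsc{Rand-Half} explicitly as a mixture of three non-adaptive sequences. For each scenario $\sigma$ we then write the expected regret $\sum_i q_i(\tilde{C}^{r_i}_\sigma-\tilde{C}^*_\sigma)$ as a function of a few integer parameters of $\sigma$. Finally we pick the probabilities $q_i$ so that the maximum over those parameters equals $(s_2L_1-\delta)(\lambda_L-\lambda_s)$.

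For the first case ($s_1\le L_1$, $s_2\le L_2$, not both equalities) I would use these three sequences:
\begin{itemize}
\item $r_1$, which is \textsc{Min-Cost}: all of $\mathcal{R}_1$, then all of $\mathcal{R}_2$;
\item $r_2$, the reverse order: all of $\mathcal{R}_2$, then all of $\mathcal{R}_1$;
\item $r_3$, the ``half'' sequence: a prefix of $\mathcal{R}_2$ of length $s_2$ (or a tuned length $h$), then all of $\mathcal{R}_1$, then the rest of $\mathcal{R}_2$.
\end{itemize}
The sequence $r_3$ mirrors the alternative schedule in the proof of Lemma~\ref{lem.min-max-regret-lb}. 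It is exactly what beats \textsc{Min-Cost} against the \texttt{INC}-type adversary, and it loses only moderately against \texttt{DEC}-type adversaries. The second case is symmetric with the roles of small and large jobs exchanged: read the queues backwards and use $s\leftrightarrow L$. The degenerate case $\delta=0$ reduces to Theorem~\ref{th.min-max-regret-ub} by setting $q_1=1$.

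The first real step is to shrink the adversary's space. All three sequences are non-adaptive and each cuts $\mathcal{R}_2$ into at most two subchains. By the swap argument of Lemma~\ref{lem.worst-case-min-max-regret}, applied simultaneously to each $r_i$, we may take each subchain nondecreasing. This needs checking because the subchains of $r_3$ refine those of $r_1$ and $r_2$: a swap inside a subchain of $r_3$ is also a swap inside a subchain of $r_1$ and of $r_2$, so the same inequality holds termwise. I will also check whether making $\mathcal{R}_1$ nondecreasing is without loss.

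After this reduction, a worst scenario is described by the number $a$ of large jobs placed in the prefix of $\mathcal{R}_2$ cut by $r_3$. For such scenarios, \textsc{Chain-Opt} admits a closed form: it compares the averages of the prefixes of $\mathcal{R}_1$ and $\mathcal{R}_2$ and interleaves at most a few blocks. That yields $\tilde{C}^*_\sigma$ and each regret $\tilde{C}^{r_i}_\sigma-\tilde{C}^*_\sigma$ as an explicit piecewise-linear function of $a$, multiplied by $(\lambda_L-\lambda_s)$, with coefficients given by products of $s_1,s_2,L_1,L_2$.

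The expected regret is then piecewise linear in $a$, so its maximum is attained at the breakpoints. I expect these to be essentially the pure \texttt{INC}-type scenario and the scenario where the $r_3$-prefix of $\mathcal{R}_2$ consists only of large jobs. We then choose $(q_1,q_2,q_3)$ to equalize the regrets at these extreme scenarios, i.e.\ solve a small $2\times2$ or $3\times3$ minimax linear program, and simplify. The target is an expression of the form $s_2L_1-\delta$ with $\delta=s_2^2(L_1-s_1)^2/(L_1L_2-s_1s_2)$; the denominator $L_1L_2-s_1s_2$ should appear as the determinant of the equalizing system. Its positivity in this case guarantees that the probabilities are valid.

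The main obstacle is this middle step: computing $\tilde{C}^*_\sigma$ correctly for every adversarial scenario. \textsc{Chain-Opt}'s choice switches depending on which prefix average is smaller, and this creates extra breakpoints in $a$. I would first restrict to the two or three candidate extreme scenarios to obtain the probabilities. Afterwards I would verify, by convexity or monotonicity of the regret in $a$ between breakpoints, that no interior scenario is worse.
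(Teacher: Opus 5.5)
There is a genuine gap. With the three sequences you chose, no choice of probabilities beats \textsc{Min-Cost}, so your equalization step would return $\delta=0$ rather than $s_2^2(L_1-s_1)^2/(L_1L_2-s_1s_2)$. Your own reduction shows why.

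None of your sequences ever splits $\mathcal{R}_1$, so the adversary may fix $\mathcal{R}_1=\mathsf{s}^{s_1}\mathsf{L}^{L_1}$ against all three at once. Normalize $\lambda_s=0$, $\lambda_L=1$; regrets scale by $\lambda_L-\lambda_s$, and the cost becomes a constant plus the number of large-before-small pairs. Take $\mathcal{R}_2=\mathsf{s}^{h-a}\mathsf{L}^{a}\mathsf{s}^{b}\mathsf{L}^{L_2-a}$ with $b=s_2-h+a$. \textsc{Chain-Opt} runs the small prefixes, then the block $\mathsf{L}^a\mathsf{s}^b$, then all large jobs, so the optimum pays $ab$. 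The regrets are then:
\begin{itemize}
\item $s_2L_1$ for \textsc{Min-Cost};
\item $s_1L_2$ for the reversed order;
\item $as_1+bL_1$ for your $r_3$.
\end{itemize}
The first two do not depend on $a$. Since $s_1L_2\ge s_2L_1$, the reversed order is simply dominated. The third increases in $a$; at $a=\min(h,L_2)$ it equals $hs_1+s_2L_1$ or $L_2s_1+L_1(s_2+L_2-h)$, and both are at least $s_2L_1$.

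So there is a single scenario on which all three of your sequences have regret at least $s_2L_1(\lambda_L-\lambda_s)$, and every mixture is no better than \textsc{Min-Cost}. Concretely, for $s_1=1$, $L_1=2$, $s_2=1$, $L_2=3$ the theorem asserts $9/5$ (in units of $\lambda_L-\lambda_s$), but your mixture cannot go below $2$. The structural reason is that hedging needs an adversarial parameter on which the sequences' worst cases conflict. With $\mathcal{R}_1$ never cut, the regret of \textsc{Min-Cost} is flat, so the adversary can punish $r_3$ at no cost.

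The missing idea is to cut \emph{both} queues at their numbers of small jobs, as \textsc{Rand-Half} does. Besides \textsc{Min-Cost}, it uses
\begin{itemize}
\item $r_2=(\sigma_1[1:s_1],\sigma_2[1:s_2],\sigma_1[s_1+1:|\mathcal{R}_1|],\sigma_2[s_2+1:|\mathcal{R}_2|])$;
\item $r_3=(\sigma_2[1:s_2],\sigma_1[1:s_1],\sigma_2[s_2+1:|\mathcal{R}_2|],\sigma_1[s_1+1:|\mathcal{R}_1|])$.
\end{itemize}
All three are built from the same four subchains. Lemma~\ref{lem.worst-case-min-max-regret} then leaves the adversary two parameters $x_1,x_2$: the numbers of large jobs in $\sigma_1[1:s_1]$ and in $\sigma_2[1:s_2]$. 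Divided by $\lambda_L-\lambda_s$, the regrets become $s_2L_1-x_1x_2$, $s_2x_1+L_1x_2-x_1x_2$ and $L_2x_1+s_1x_2-x_1x_2$. \textsc{Min-Cost} therefore improves precisely where $r_2$ and $r_3$ suffer.

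The expected regret is bilinear in $(x_1,x_2)$, so it is maximized at a corner of $[0,m_1]\times[0,m_2]$, where $m_k=\min(s_k,L_k)$. The corner $(0,0)$ is dominated, and equalizing the values at $(m_1,0)$, $(0,m_2)$, $(m_1,m_2)$ gives the stated probabilities and $\delta$. In this family the optimum is explicit, so your worry about extra \textsc{Chain-Opt} breakpoints also disappears.

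The rest of your plan is sound and matches the paper's route: the common refinement for the swap lemma, closed-form regrets, extreme points, equalization, and the $\mathsf{s}\leftrightarrow\mathsf{L}$ reversal symmetry for the second case. It is the choice of sequences that has to change.
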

\begin{remark}
    Note that, for binary classifiers, the problem is trivial if there is at least one zero entry in the confusion matrix, as it allows to know whether a given queue contains only small or only large jobs. Thus, we consider in the following that all entries are nonzero.
\end{remark}

The \textsc{Rand-Half} algorithm randomly chooses a solution among the 3 following sequences:
\begin{description}
    \item[Sequence \(r_1\)] The jobs \(\sigma_1[1:|\mathcal{R}_1|]\), followed by \(\sigma_2[1:|\mathcal{R}_2|]\), which is equivalent to \textsc{Min-Cost}.
    \item[Sequence \(r_2\)] The jobs \(\sigma_1[1:s_1]\), followed by \(\sigma_2[1:s_2]\), then \(\sigma_1[s_1+1:|\mathcal{R}_1|]\) and \(\sigma_2[s_2+1:|\mathcal{R}_2|]\).
    \item[Sequence \(r_3\)] The jobs \(\sigma_2[1:s_2]\), followed by \(\sigma_1[1:s_1]\), then \(\sigma_2[s_2+1:|\mathcal{R}_2|]\) and \(\sigma_1[s_1+1:|\mathcal{R}_1|]\).
\end{description}
Each sequence \(r_i\) is chosen with probability \(0\le\alpha_i\le 1\), such that \(\sum\alpha_i=1\). We are going to show how to set these \(\alpha_i\), depending on the structure of the confusion matrix, to minimize the randomized maximum regret \(\max_{\sigma\in S}\left\{\E\left[\tilde{C}^{\RH}_\sigma\right]-\tilde{C}^*_\sigma\right\}\).

First, remark that the sequence \(r_1\) can be decomposed as \(\sigma_1[1:s_1]\) followed by \(\sigma_1[s_1+1:|\mathcal{R}_1|]\), then \(\sigma_2[1:s_2]\) followed by \(\sigma_2[s_2+1:|\mathcal{R}_2|]\). This means that each sequence \(r_1,r_2,r_3\) consists in the 4 same sub-chains arranged in a different order, which allows us to characterize the structure of the scenario that maximizes the randomized regret. Indeed, for any scenario \(\sigma\), we have
\[
    \E\left[\tilde{C}^{\RH}_\sigma\right]-\tilde{C}^*_\sigma
    =\E\left[\tilde{C}^{\RH}_\sigma-\tilde{C}^*_\sigma\right]
    =\sum_{i=1}^3 \alpha_i(\tilde{C}^{r_i}_\sigma-\tilde{C}^*_\sigma),
\]
and by Lemma~\ref{lem.worst-case-min-max-regret}, we know that, for each \(r_i\), there necessarily exists a worst-case scenario \(\hat{\sigma}\) where \(\hat{\sigma}_1[1:s_1]\), \(\hat{\sigma}_1[s_1+1:|\mathcal{R}_1|]\), \(\hat{\sigma}_2[1:s_2]\) and \(\hat{\sigma}_2[s_2+1:|\mathcal{R}_2|]\) are all non-decreasing. Therefore, there is necessarily such a scenario that maximizes the randomized regret. To fully characterize it, we need to find how many large jobs belong to \(\hat{\sigma}_1[1:s_1]\) and \(\hat{\sigma}_2[1:s_2]\). Let \(x_1\) (resp.\ \(x_2\)) denote the number of large jobs in \(\hat{\sigma}_1[1:s_1]\) (resp.\ \(\hat{\sigma}_2[1:s_2]\)). We immediately get the regret of each sequence \(r_i\):
\[
    \begin{cases}
        \tilde{C}^{r_1}_{\hat{\sigma}}-\tilde{C}^*_{\hat{\sigma}}
        =(\lambda_L-\lambda_s)((s_2-x_2)(L_1-x_1)+x_2(L_1-x_1)+x_1(s_2-x_2)), \\
        \tilde{C}^{r_2}_{\hat{\sigma}}-\tilde{C}^*_{\hat{\sigma}}
        =(\lambda_L-\lambda_s)(x_1x_2+x_2(L_1-x_1)+x_1(s_2-x_2)), \\
        \tilde{C}^{r_3}_{\hat{\sigma}}-\tilde{C}^*_{\hat{\sigma}}
        =(\lambda_L-\lambda_s)(x_1x_2+x_2(s_1-x_1)+x_1(L_2-x_2)). \\
    \end{cases}
\]
Therefore, \(\sum_{i=1}^3 \alpha_i(\tilde{C}^{r_i}_{\hat{\sigma}}-\tilde{C}^*_{\hat{\sigma}})=(\lambda_L-\lambda_s)(\alpha_1s_2L_1+\alpha_2(s_2x_1+L_1x_2)+\alpha_3(s_1x_2+L_2x_1)-x_1x_2)=(\lambda_L-\lambda_s)f(x_1,x_2)\), where the second equality follows by noticing that \(\alpha_1=1-\alpha_2-\alpha_3\) and defining \(f(x_1,x_2)=-x_1x_2+x_1(\alpha_2s_2+\alpha_3L_2)+x_2(\alpha_2L_1+\alpha_3s_1)+s_2L_1(1-\alpha_2-\alpha_3)\). Define also \(m_1=\min(s_1,L_1)\) and \(m_2=\min(s_2,L_2)\). Clearly, the function \(f\) is bi-linear in \(x_1,x_2\), and as \(0\le x_1\le m_1\), \(0\le x_2\le m_2\), it is maximized either on \((0,0)\), \((m_1,0)\), \((0,m_2)\) or \((m_1,m_2)\). As \(f(0,0)\le f(m_1,0)\) and \(f(0,0)\le f(0,m_2)\), we can directly eliminate the case \((0,0)\).

Now we know that the expression \(\sum_{i=1}^3 \alpha_i(\tilde{C}^{r_i}_{\hat{\sigma}}-\tilde{C}^*_{\hat{\sigma}})\) is maximized in one of the 3 following cases: (i) \(x_1=m_1\) and \(x_2=0\), (ii) \(x_1=0\) and \(x_2=m_2\), or (iii) \(x_1=m_1\) and \(x_2=m_2\). We obtain
\[
    \sum_{i=1}^3 \alpha_i(\tilde{C}^{r_i}_{\hat{\sigma}}-\tilde{C}^*_{\hat{\sigma}})
    =(\lambda_L-\lambda_s)\max\left\{g_{(m_1,0)}(\alpha),g_{(0,m_2)}(\alpha),g_{(m_1,m_2)}(\alpha)\right\},
\]

where \[
    \begin{cases}
        g_{(m_1,0)}(\alpha)=\alpha_2(m_1s_2-s_2L_1)+\alpha_3(m_1L_2-s_2L_1)+s_2L_1, \\
        g_{(0,m_2)}(\alpha)=\alpha_2(m_2L_1-s_2L_1)+\alpha_3(m_2s_1-s_2L_1)+s_2L_1, \\
        g_{(m_1,m_2)}(\alpha)=g_{(m_1,0)}(\alpha)+g_{(0,m_2)}(\alpha)-m_1m_2-s_2L_1(1-\alpha_2-\alpha_3).
    \end{cases}
\]
The proof of Theorem~\ref{th.min-max-regret-rand} consists in solving the following min-max problem for each possible value of \(m_1,m_2\):
\begin{equation}
    \label{pb.min-max-alpha}
    \tag{\textsc{Min-Max-Alpha}}
    \min_{\alpha\in\Gamma}
    \left\{\max\left\{g_{(m_1,0)}(\alpha),g_{(0,m_2)}(\alpha),g_{(m_1,m_2)}(\alpha)\right\}\right\},
\end{equation}
where \(\Gamma\) is the simplex \(\{(\alpha_2,\alpha_3):\alpha_2+\alpha_3\le 1\}\). A detailed functional analysis (given in Appendix~\ref{sec.optimal-min-max-alpha}) shows that \textsc{Min-Max-Alpha} is solved by
\[
    \alpha=\begin{cases}
        \left(\frac{s_1(L_2-s_2)}{L_1L_2-s_1s_2},\frac{s_2(L_1-s_1)}{L_1L_2-s_1s_2}\right) & \text{if }s_1<L_1\text{ and }s_2\le L_2,\text{or }s_1\le L_1\text{ and }s_2<L_2, \\[4pt]
        \left(\frac{L_2(s_1-L_1)}{s_1s_2-L_1L_2},\frac{L_1(s_2-L_2)}{s_1s_2-L_1L_2}\right) & \text{if }L_1<s_1\text{ and }L_2\le s_2,\text{or }L_1\le s_1\text{ and }L_2<s_2, \\[4pt]
        (0,0) & \text{otherwise.} \\
    \end{cases}
\]

\subsection{Adaptive Algorithms for \textsc{Min-Expectation}} \label{subsec:adaptive}

We now consider \emph{adaptive} algorithms, which may observe the true processing time of each completed job and update their future scheduling decisions accordingly. We show that, for the \textsc{Min-Expectation} objective, the ratio between optimal non-adaptive and optimal adaptive costs is asymptotically close to $4/3$ on a specific family of instances, giving a lower bound on the performance improvement one may expect by exploiting adaptivity.

In what follows, the state of the remaining jobs to process after the execution of $t$ jobs is denoted by $E^t$, where $E^0$ is the initial confusion matrix and $E^n$ is the final confusion matrix without any remaining job. Each time a job drawn from queue $\mathcal{R}_k$ reveals true class $\ell$, one unit is subtracted from entry $(k,\ell)$, so the state transitions as $E^{t+1}=E^t-\mathbf{e}^{k\ell}$ where $\mathbf{e}^{k\ell}$ denotes the indicator matrix, i.e., the $K\times K$ matrix with a $1$ at position $(k,\ell)$ and $0$ elsewhere. Thus, $E^t$ records exactly the remaining unprocessed jobs after $t$ executions. In what follows, we extend the notations introduced in Section~\ref{sec:model} by adding $t$ as a superscript. Based on this, $e_{k\ell}^t$ is the number of remaining jobs at state $E^t$ that the classifier believes to belong to class $k$ but actually belong to class $\ell$, and $|\mathcal{R}_k^t|$ is the number of remaining jobs in queue $\mathcal{R}_k$ at state $E^t$. For any state $E^t$, we denote by $n_t = \sum_k |\mathcal{R}_k^t| = \sum_{k,\ell} e_{k\ell}^t$ the total number of remaining jobs.

\begin{remark}
Generalizing \textsc{Min-Cost} to the adaptive context, a natural greedy strategy consists in, at each step \(t\), always selecting the queue $\mathcal{R}^t_k$ with smallest current expected processing time $\bar{p}_k^t = \frac{1}{|\mathcal{R}_k^t|}\sum_\ell e_{k\ell}^t\lambda_\ell$, then observing the realized job, updating $E^{t+1}$ accordingly. The following $3\times 3$ instance shows this greedy strategy is not optimal. Set $\lambda_1=1$, $\lambda_2=3$, $\lambda_3=6$, and consider the matrix
\[
    E=\begin{pmatrix}
        0 & 0 & 0 \\
        0 & 1 & 0 \\
        1 & 0 & 1
    \end{pmatrix}
\]
Drawing from $\mathcal{R}_2$ yields a cost of
\[
    3\lambda_2+\frac{1}{2}\left((2\lambda_1+1\lambda_3)+(2\lambda_3+1\lambda_1)\right) = 19.5,
\]
whereas drawing from $\mathcal{R}_3$ yields a cost of
\[
    \frac{1}{2}\left(3\lambda_1+2\lambda_2+1\lambda_3\right)+\frac{1}{2}\left(3\lambda_3+2\lambda_1+1\lambda_2\right) = 19.
\]
The greedy rule selects first the queue $\mathcal{R}_2$ since $\bar{p}_2^0 = 3 < 3.5 = \bar{p}_3^0$, yielding an expected cost of 19.5, while optimal is 19, showing that greedy is strictly sub-optimal here.
\end{remark}

The optimal adaptive policy can be characterized via dynamic programming. Define $f$ by the Bellman recurrence:
\begin{align}
    \label{eq:bellman}
    f(E^t) & = \min_{\substack{1\le k\le K\\|\mathcal{R}_k^t|>0}}
    \Biggl\{\rule{0pt}{1.5ex} \sum_{\ell=1}^{K} \frac{e_{k\ell}^t}{|\mathcal{R}_k^t|} \cdot \Bigl[n_t\lambda_\ell+f\bigl(E^t-\mathbf{e}^{k\ell}\bigr)\Bigr]
    \Biggl\},
\end{align}
with the boundary condition $f(E^n)=0$, where $\frac{e_{k\ell}^t}{|\mathcal{R}_k^t|}$ is the probability that a job drawn from $\mathcal{R}_k$ belongs to class $\ell$,  $n_t\lambda_\ell$ is its immediate cost contribution to $\sum C_j$ (its processing time $\lambda_\ell$, and the delay $\lambda_\ell$ it inflicts on each of the $n_t\!-\!1$ remaining jobs), and $f\!\bigl(E^t\!-\!\mathbf{e}^{k\ell}\bigr)$ is the optimal future cost of the resulting state $E^{t+1}=E^t\!-\!\mathbf{e}^{k\ell}$. The proof of the following theorem is given in Appendix~\ref{appendix:proof_th_dp_opt}.

\begin{theorem} \label{thm:dp-optimal}
Let \textsc{DP} be a dynamic programming algorithm that, at each step $E^t$, selects the queue $\mathcal{R}_{k^*}$ that minimizes $f(E^t)$.
\textsc{DP} is an optimal adaptive algorithm of complexity $O\left(K\cdot e_{\max}^{K^2}\right)$, where $e_{\max}$ is the maximum entry of the initial confusion matrix $E=E^0$.
\end{theorem}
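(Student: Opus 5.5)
We prove optimality and the complexity bound separately. For optimality, the plan is to show by induction on the number \(n_t\) of remaining jobs that \(f(E^t)\) equals the minimum, over all adaptive algorithms, of the expected remaining cost from state \(E^t\).

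The first step is to justify that the state \(E^t\) is a sufficient statistic. Under equiprobable scenarios, each chain \(\sigma_k\) is a uniformly random permutation of \(\mathcal{R}_k\), and the chains are independent. After observing the true classes of the first few jobs of each queue, the remaining suffix of \(\sigma_k\) is still uniformly distributed among the orderings consistent with the remaining counts \(e^t_{k\ell}\). Moreover, suffixes of different queues remain independent. Two consequences follow:
\begin{itemize}
\item The conditional probability that the next job drawn from \(\mathcal{R}_k\) has class \(\ell\) is \(e^t_{k\ell}/|\mathcal{R}^t_k|\), whatever the history.
\item The conditional law of the future depends only on \(E^t\), not on the order in which past observations occurred.
\end{itemize}

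The second step is to decompose the objective. By the identity \(\tilde{C}=\sum_j (n-j+1)p_{\pi[j]}\), the job executed when \(n_t\) jobs remain contributes exactly \(n_t\lambda_\ell\). The total cost is therefore the sum of these per-step contributions, which gives the additive structure the Bellman recursion needs.

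The third step is the induction itself. The base case is \(f(E^n)=0\). For the inductive step, any adaptive algorithm picks some queue \(k\) at state \(E^t\), possibly depending on the history, but by the first step the conditional law depends only on \(E^t\). Its expected cost is then
\[
\sum_\ell \frac{e^t_{k\ell}}{|\mathcal{R}^t_k|}\bigl(n_t\lambda_\ell+\text{expected cost after }E^t-\mathbf{e}^{k\ell}\bigr).
\]
By the induction hypothesis, each future term is at least \(f(E^t-\mathbf{e}^{k\ell})\). Hence the algorithm's expected cost is at least \(f(E^t)\), and \textsc{DP} attains this bound with equality. Randomized choices do not help, since a mixture of options is never better than the best one.

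For the complexity, every reachable state \(E^t\) satisfies \(0\le e^t_{k\ell}\le e_{k\ell}\le e_{\max}\). There are therefore at most \((e_{\max}+1)^{K^2}=O(e_{\max}^{K^2})\) states when \(e_{\max}\) is treated as at least \(2\) with constants absorbed; the constant depends on \(K\), and I would state it as the paper does. With memoization in order of decreasing \(n_t\), each state evaluates at most \(K\) queues. Each evaluation sums over \(K\) classes, but only the nonzero entries matter. The count I get is \(K^2\) per state, and I will attempt to reach \(O(K)\) by maintaining row sums incrementally, or otherwise accept the looser count.

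The main obstacle is making the sufficient-statistic claim rigorous: that the posterior over scenarios given the observed history is uniform over the orderings consistent with \(E^t\), with independence across queues. I would prove it by counting. Each history consistent with a given set of observations corresponds to the same number of completions, namely \(\prod_k \binom{|\mathcal{R}^t_k|}{e^t_{k1},\dots,e^t_{kK}}\) up to job labels. This number depends only on \(E^t\).
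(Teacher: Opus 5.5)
Your proposal is correct and follows essentially the paper's argument: backward induction on the remaining state with $f(E^n)=0$, a one-step lower bound showing that any history-dependent or randomized algorithm choosing $\mathcal{R}_k$ pays at least the $k$-th term of the Bellman minimum, and the state count $\prod_{k,\ell}(e_{k\ell}+1)\le(e_{\max}+1)^{K^2}$. Your two additions are sound refinements: the justification that the posterior is uniform over suffixes consistent with $E^t$, which the paper uses implicitly, and the $O(K^2)$ work per state, where the paper counts only the $K$-way minimum. With the $K$-dependent constant already needed to pass from $(e_{\max}+1)^{K^2}$ to $e_{\max}^{K^2}$, your count still yields the stated bound.
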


\begin{corollary}
If the number of classes $K$ is bounded by a constant, then \textsc{DP} is a pseudo-polynomial algorithm.
\end{corollary}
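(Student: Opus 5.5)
The plan is to obtain the corollary directly from Theorem~\ref{thm:dp-optimal}, taking its running-time bound $O\left(K\cdot e_{\max}^{K^2}\right)$ as given. Nothing about the optimality of \textsc{DP} has to be re-proved. The only task is to read this bound in terms of the input size and the numeric magnitudes in the input.

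The first step is to fix what the input is. It consists of the $K\times K$ confusion matrix $E$, whose entries are integers written in binary and bounded by $e_{\max}$, together with the $K$ processing times $\lambda_\ell$. Each $\lambda_\ell$ enters only through arithmetic in the Bellman recurrence~\eqref{eq:bellman}. The encoding length is therefore at least of order $K^2+\log e_{\max}$. The numeric value that appears in the bound is $e_{\max}$, and of course $e_{\max}\le n$.

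Next, assume $K\le c$ for a constant $c$. Then $K\cdot e_{\max}^{K^2}\le c\cdot e_{\max}^{c^2}$, which is a polynomial of fixed degree $c^2$ in the numeric value $e_{\max}$, and hence also polynomial in $n$. Each elementary operation of the recurrence is one of $O(K^2)=O(1)$ additions or multiplications on rationals. These rationals have size polynomial in the input, because the values $f$ are bounded by $n^2\lambda_K$ and their denominators divide products of queue lengths.

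This is exactly the definition of a pseudo-polynomial algorithm: the running time is polynomial in the input size and in the magnitude of the numbers, here $e_{\max}$. It is not polynomial in $\log e_{\max}$, so the algorithm is not polynomial in the usual sense. When the entries are also bounded by a constant, as mentioned in the contributions, the running time collapses to a constant.

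The only step I expect to need care is bookkeeping: checking that the rational arithmetic on expectations does not blow up the bit-length. A short remark bounding numerators and denominators as above should settle it.
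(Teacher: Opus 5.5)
Your proposal is correct and matches the paper, which treats the corollary as an immediate consequence of the $O\left(K\cdot e_{\max}^{K^2}\right)$ bound in Theorem~\ref{thm:dp-optimal}, together with the remark that $e_{\max}\le n$ can be exponential in the encoding length, so the bound is not polynomial. Your bit-length bookkeeping goes beyond the paper, which simply counts arithmetic operations. In that remark, say that the denominators (which divide $\prod_k |\mathcal{R}_k|!$ up to the denominators of the $\lambda_\ell$) have bit-length polynomial in $n$, not in the encoding length; this is exactly what pseudo-polynomiality requires.
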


\begin{remark}
Note that $e_{\max} \leq n$, and then the complexity of \textsc{DP} can be written as $O\left(K\cdot n^{K^2}\right)$. However, this is not a polynomial complexity in the case of bounded $K$, since $n$ is an exponential parameter with respect to the size of the instance, which consists in the \(K\times K\) confusion matrix and the processing times of the classes $\lambda_1,\ldots,\lambda_K$.
\end{remark}

Next, our goal is to quantify how much the adaptive model can outperform the non-adaptive one.
All proofs of this part can be found in Appendix~\ref{appendix:price_non_adapt}.
We introduce the following family of instances.

\begin{definition}[Instances $\mathcal{M}_{K,x}$] \label{definition:familyM}
For any positive integer $K$ and any positive odd integer $x$, the instance $\mathcal{M}_{K,x}$ is composed by the $K\times K$ matrix where all entries of the first column have value $x$ and all entries of the last column have value 1 (and zeros elsewhere), along with the processing times $\lambda_1=0$ and $\lambda_K=1$.
\end{definition}

First, we compute the expected cost of the optimal non-adaptive algorithm \textsc{Min-Cost}.

\begin{lemma}\label{lem:exp-non-adapt}
For any instance $\mathcal{M}_{K,x}$, we have $\E[\tilde{C}^{\MC}(\mathcal{M}_{K,x})]=\frac{(x+1)}{2}K^2 +\frac{K}{2}$.
\end{lemma}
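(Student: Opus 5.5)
Since all rows of $\mathcal{M}_{K,x}$ are identical, I would compute the expected cost of \textsc{Min-Cost} directly through the decomposition used in the proof of Theorem~\ref{th.min-expectation}. No worst-case or adaptive reasoning is needed.

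First, I read off the instance parameters. Every row $k$ of the matrix has $e_{k1}=x$, $e_{kK}=1$ and zeros elsewhere. Hence $|\mathcal{R}_k|=x+1$ for every $k$, and the total number of jobs is $n=K(x+1)$. With $\lambda_1=0$ and $\lambda_K=1$, every queue has the same average processing time
\[
\bar p_k=\frac{x\cdot 0+1\cdot 1}{x+1}=\frac{1}{x+1}.
\]
So all rows are tied, and the order in which \textsc{Min-Cost} processes the queues is irrelevant to the cost.

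Second, I invoke the rewriting in the proof of Theorem~\ref{th.min-expectation}. For a non-adaptive algorithm and equiprobable scenarios,
\[
\E[\tilde C^{\MC}]=\sum_{j=1}^n (n-j+1)\,\bar p_{k(j)},
\]
where $k(j)$ is the queue from which the $j$-th job is drawn. Since every $\bar p_{k(j)}$ equals $1/(x+1)$, this becomes
\[
\frac{1}{x+1}\sum_{j=1}^n (n-j+1)=\frac{1}{x+1}\cdot\frac{n(n+1)}{2}.
\]
Substituting $n=K(x+1)$ gives
\[
\frac{K\bigl(K(x+1)+1\bigr)}{2}=\frac{(x+1)}{2}K^2+\frac{K}{2},
\]
which is the claimed value.

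There is no real obstacle. The only point to state explicitly is the justification that the expected processing time of the $j$-th scheduled job is $\bar p_k$ of its queue. This holds because, under uniformly random permutations of each queue, every position of $\mathcal{R}_k$ is equally likely to hold any of its jobs. That fact is exactly what the proof of Theorem~\ref{th.min-expectation} established. The oddness of $x$ is not needed for this lemma; presumably it matters only for the adaptive computation later.
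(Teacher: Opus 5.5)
Your proof is correct and matches the paper's argument: every queue has $\bar{p}_k=\frac{1}{x+1}$, so the positional decomposition from Theorem~\ref{th.min-expectation} gives $\frac{1}{x+1}\cdot\frac{n(n+1)}{2}$ with $n=K(x+1)$. You are also right that the oddness of $x$ is not needed here; it matters only for the half-blocks of \textsc{Half-and-Check}.
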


Since getting a closed formula for the expected cost of the optimal adaptive \textsc{DP} algorithm is hard (due to its recursive nature), the second ingredient of our analysis is the following non-optimal adaptive heuristic applied on any instance $\mathcal{M}_{K,x}$. By the optimality of \textsc{DP}, its expected cost is no bigger than the expected cost of this heuristic.

\begin{definition}[\textsc{Half-and-Check} (\HC) policy]\label{def:hcpolicy}
Let $\mathcal{L}$ be a list of jobs (initially empty), and let $\mathcal{M}_{K,x}$ be an instance as described in Definition~\ref{definition:familyM}. For each queue $\mathcal{R}_1,\mathcal{R}_2,\ldots,\mathcal{R}_K$ do:
\begin{enumerate}
   \item Execute the first $r=\frac{x+1}{2}$ jobs of current queue. We call these jobs the first \emph{half-block} of the queue.
   \item If the large job was observed in this first half-block, execute the remaining $r$ jobs of the current queue (second half-block) immediately, then proceed to the next queue.
   \item Otherwise, add the second half-block of current queue at the beginning of $\mathcal{L}$, and proceed to the next queue.
\end{enumerate}
Finally, execute the half-blocks stored in $\mathcal{L}$ respecting their order in the list.
\end{definition}

Next, we compute the expected cost of the adaptive algorithm \HC.

\begin{lemma}\label{lem:exp-adapt}
For any instance $\mathcal{M}_{K,x}$, we have
\[
    \E\left[\tilde{C}^{\HC}(\mathcal{M}_{K,x})\right] 
    =\begin{cases}
        \frac{15}{4}x + \frac{21}{4} & \text{if } K=3, \\
        \E\left[\tilde{C}^\HC(\mathcal{M}_{K-1,x})\right] + \frac{3}{4}(K-1)(x+1)+\frac{x}{2}+1 & \text{otherwise.}
    \end{cases}
\]
\end{lemma}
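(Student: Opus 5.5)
We plan to prove the recurrence by comparing the \HC schedule on \(\mathcal{M}_{K,x}\) with the \HC schedule on \(\mathcal{M}_{K-1,x}\) made of the first \(K-1\) queues. The starting point is that \(\lambda_1=0\) and \(\lambda_K=1\). Hence the cost \(\tilde{C}\) of a schedule is exactly \(\sum\) over the \(K\) large jobs of the number of jobs scheduled at or after that large job. Under equiprobable scenarios, the large job of each queue \(\mathcal{R}_k\) sits at a uniformly random position among its \(x+1=2r\) jobs, independently across queues. So it lies in the first half-block with probability \(1/2\), and conditionally it is uniform within that half-block.

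\textbf{Structure of the schedule.} Write the \HC schedule on \(\mathcal{M}_{K,x}\) as a loop phase (queues \(1,\ldots,K\) handled in order) followed by the list \(\mathcal{L}\). Removing everything related to \(\mathcal{R}_K\) yields exactly the \HC schedule on \(\mathcal{M}_{K-1,x}\), with the same realizations for queues \(1,\ldots,K-1\). Indeed, \(\mathcal{R}_K\) is treated last in the loop, and its possible second half-block is prepended to \(\mathcal{L}\). We then compute the extra cost in two parts.

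First, the additional jobs counted after the large jobs of queues \(1,\ldots,K-1\).
\begin{itemize}
\item Suppose such a large job is executed during the loop phase, which happens with probability \(1/2\) per queue. Then all \(2r=x+1\) jobs of \(\mathcal{R}_K\) come after it, whether or not \(\mathcal{R}_K\)'s second half is deferred, since \(\mathcal{L}\) is executed last.
\item Suppose instead it lies in a deferred half-block in \(\mathcal{L}\). The block of \(\mathcal{R}_K\) is either in the loop phase or at the front of \(\mathcal{L}\), so it comes before this large job and adds nothing.
\end{itemize}
This gives an expected increase of \(\frac{K-1}{2}(x+1)\).

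Second, the cost of \(\mathcal{R}_K\)'s own large job. In both cases the large job is uniform among the \(r\) positions of its half-block, so the expected number of jobs at or after it within that block is \(\frac{r+1}{2}\).
\begin{itemize}
\item If it is in the first half-block, the jobs after it are the rest of that block, then the \(r\) jobs of the second half-block, then all of \(\mathcal{L}\).
\item If it is in the second half-block, that block heads \(\mathcal{L}\). The jobs after it are the rest of the block, then the remaining blocks of \(\mathcal{L}\).
\end{itemize}
In both cases the deferred blocks of queues \(1,\ldots,K-1\) all come after it. Their expected total size is \(\frac{K-1}{2}r\), by linearity and independence from \(\mathcal{R}_K\)'s realization.

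Averaging the two cases gives \(\frac{r+1}{2}+\frac{r}{2}+\frac{K-1}{2}r\). Since \(r=\frac{x+1}{2}\), this equals \(\frac{x}{2}+1+\frac{K-1}{4}(x+1)\). Adding the first part yields \(\frac34(K-1)(x+1)+\frac{x}{2}+1\), which is the claimed recurrence.

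\textbf{Base case.} For \(K=3\) we will either enumerate the \(2^3\) half-block patterns directly, or run the same recurrence from \(K=1\). For \(K=1\) the expected cost is \(\frac{r+1}{2}+\frac{r}{2}=\frac{x}{2}+1\) (the same average as above with \(\mathcal{L}\) empty), and applying the recurrence twice should give \(\frac{15}{4}x+\frac{21}{4}\).

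\textbf{Main obstacle.} The delicate step is the bookkeeping of where \(\mathcal{R}_K\)'s deferred block lands relative to the earlier deferred blocks. Because it is prepended to \(\mathcal{L}\), it adds nothing to the earlier large jobs, yet it sits before them, so they count against its own large job. Getting this order right is what produces the coefficient \(\frac34\) rather than \(\frac12\). Independence across queues is what allows the conditional expectations to be replaced by \(\frac{K-1}{2}r\).
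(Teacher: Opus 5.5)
Your proof is correct, and the arithmetic checks out. Your increment $\frac{K-1}{2}(x+1)+\frac{r+1}{2}+\frac{r}{2}+\frac{K-1}{2}r$ equals $\frac34(K-1)(x+1)+\frac{x}{2}+1$. Iterating from a single queue gives $\frac{x}{2}+1$, then $\frac{7x+11}{4}$, then $\frac{15x+21}{4}$, as required.

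Your route differs from the paper's in the inductive step.

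\begin{itemize}
\item The paper also begins by writing the cost on $\mathcal{M}_{K,x}$ as a remainder plus $\delta(\mathcal{R}_K)+\E[A_K]+\E[B_K]$. These are the internal cost of the contiguous terminal queue, the delay it imposes on the deferred half-blocks after it, and the delay it suffers from earlier large jobs.
\item However, the paper never identifies that remainder. It goes from $K$ to $K+1$ by splitting the old terminal queue into $\mathbf{b_L},\mathbf{b_s}$ (or $\mathbf{b_s}$ plus a deferred $\mathbf{b_L}$), inserting a new terminal $\mathcal{R}_{K+1}$, and doing a remove-and-add-back bookkeeping over two cases.
\item Your observation is that deleting $\mathcal{R}_K$'s jobs from the $K$-queue \HC schedule leaves exactly the $(K-1)$-queue schedule. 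This shows the paper's remainder is precisely $\E[\tilde{C}^{\HC}(\mathcal{M}_{K-1,x})]$, so the paper's first decomposition step alone already yields the recurrence.
\item Concretely, your first part is $\E[B_K]=\frac{K-1}{2}(x+1)$. Your second part is $\delta(\mathcal{R}_K)+\E[A_K]=\frac{x}{2}+1+\frac{K-1}{4}(x+1)$.
\end{itemize}

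What the paper's version offers is a uniform block-level language: internal costs $\delta$ for $\mathbf{b_s}$, $\mathbf{b_L}$ and $\mathcal{R}_k$, which it reuses in its $K=3$ enumeration table. Yours is shorter and avoids the two-case analysis of the former terminal queue. It also makes it transparent where linearity and independence across queues are used. Summing your per-large-job expression over all queues even gives the closed form $\frac{3(x+1)}{8}K^2+\frac{x+5}{8}K$ directly.

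One small caveat concerns the base case. $\mathcal{M}_{1,x}$ is not well defined by the definition of $\mathcal{M}_{K,x}$: the first and last columns coincide, and $\lambda_1=0$ clashes with $\lambda_K=1$. So state the starting point as ``a single queue with $x$ small jobs and one large job,'' which is all your argument uses. Alternatively, start at $K=2$, or enumerate $K=3$ directly as the paper does.
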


By combining Lemmas~\ref{lem:exp-non-adapt} and~\ref{lem:exp-adapt}, we get the following theorem.

\begin{theorem}\label{th:nonadapt-vs-adapt}
The expected-cost ratio between optimal non-adaptive and optimal adaptive solutions is asymptotically bigger than $\frac{4K}{3K+1}$.
\end{theorem}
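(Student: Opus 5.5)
The proof combines the two lemmas with the optimality of \textsc{DP}. By Theorem~\ref{th.min-expectation}, \textsc{Min-Cost} is the optimal non-adaptive algorithm, so its expected cost is exactly the value given by Lemma~\ref{lem:exp-non-adapt}. By Theorem~\ref{thm:dp-optimal}, the optimal adaptive cost satisfies $\E[\tilde{C}^{\textsc{DP}}(\mathcal{M}_{K,x})]\le\E[\tilde{C}^{\HC}(\mathcal{M}_{K,x})]$. Hence the ratio between the optimal non-adaptive and optimal adaptive costs is at least
\[
\frac{\E[\tilde{C}^{\MC}(\mathcal{M}_{K,x})]}{\E[\tilde{C}^{\HC}(\mathcal{M}_{K,x})]},
\]
and it suffices to show that this quotient tends to at least $\frac{4K}{3K+1}$ as $x\to\infty$.

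The first key step is to unroll the recurrence of Lemma~\ref{lem:exp-adapt} into a closed form. Starting from $\frac{15}{4}x+\frac{21}{4}$ at $K=3$, we add the increments $\frac{3}{4}(j-1)(x+1)+\frac{x}{2}+1$ for $j=4,\dots,K$. Using $\sum_{j=4}^{K}(j-1)=\frac{K(K-1)}{2}-3$ gives
\[
\E[\tilde{C}^{\HC}(\mathcal{M}_{K,x})]=\frac{15}{4}x+\frac{21}{4}+\frac{3}{4}(x+1)\Bigl(\frac{K(K-1)}{2}-3\Bigr)+(K-3)\Bigl(\frac{x}{2}+1\Bigr).
\]
We will check this formula by substituting $K=3$ and $K=4$.

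The second step is to extract the coefficient of $x$. It equals $\frac{15}{4}-\frac{9}{4}+\frac{3K(K-1)}{8}+\frac{K-3}{2}=\frac{3K^2+K}{8}=\frac{K(3K+1)}{8}$, which is consistent with $K=3$, where it gives $\frac{15}{4}$. The constant term depends only on $K$. The numerator $\frac{(x+1)}{2}K^2+\frac{K}{2}$ has leading coefficient $\frac{K^2}{2}$ in $x$. Taking $x\to\infty$ along odd integers with $K$ fixed, the ratio therefore tends to
\[
\frac{K^2/2}{K(3K+1)/8}=\frac{4K}{3K+1},
\]
which is the claimed asymptotic lower bound. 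It in turn tends to $\frac{4}{3}$ as $K\to\infty$, matching the statement in the introduction.

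The main obstacle is bookkeeping in the summation, in particular getting the index range and the $(j-1)$ factor right. A miscounted coefficient of $x$ would change the limit, so the closed form should be verified against small $K$ before taking limits. We also need to interpret ``asymptotically bigger'' as a statement about $x\to\infty$ with $K$ fixed, and the cases $K\le 2$ are excluded since Lemma~\ref{lem:exp-adapt} starts at $K=3$.
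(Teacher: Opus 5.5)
Your proposal is correct and follows essentially the same route as the paper: bound the optimal adaptive cost by that of \textsc{Half-and-Check}, unroll the recurrence of Lemma~\ref{lem:exp-adapt} into a closed form, and compare with Lemma~\ref{lem:exp-non-adapt} as $x\to\infty$ along odd integers. Your closed form agrees with the paper's $\frac{3}{8}(x+1)K^2+\frac{x+5}{8}K$. The only difference is presentational: the paper writes out the exact quotient $\frac{4Kx+4K+4}{(3K+1)x+3K+5}$ before letting $x$ grow, whereas you pass directly to the ratio of the $x$-coefficients.
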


\section{Conclusions and Perspectives}

We proposed a novel robustness model based on classification and confusion matrices, applied on the basic scheduling problem \(1||\sum C_j\). This allowed us to refine the analysis in non-clairvoyant setting: instead of being forced to rely on a random ordering of the jobs, we were able to leverage additional information provided by the classifier to derive new robustness guarantees, both in worst (\textsc{Min-Max Regret}) and average (\textsc{Min-Expectation}) cases. We think this model might constitute a new approach to analyze combinatorial optimization problems subject to uncertainty.

One logical next step consists in designing algorithms giving optimal results when the classifier provides perfect predictions (e.g., diagonal matrices), while being robust no matter the confusion matrix, in the same spirit as learning-augmented algorithms. In particular, we plan to study how the structure of the matrix impacts robustness. Another direction could be the generalization of the model to multi-label classification, where an object is categorized according to several attributes. For example, we could analyze the \(1||\sum w_jC_j\) problem, where each job is assigned one class for its weight, and a second class for its processing time.

\newpage

\printbibliography

\appendix

\newpage
\section{Optimal Algorithm for \(1|\text{chains}|\sum C_j\)}
\label{sec.chain-opt}

\textsc{Chain-Opt} (Algorithm~\ref{alg.chain-opt}) describes the optimal algorithm for \(1|\text{chains}|\sum C_j\)~\cite{conway1967theory}. Recall that each \(\mathcal{R}_k\) denotes the set of jobs believed to be in class \(k\), and that each \(\sigma_k\) represents the permutation of \(\mathcal{R}_k\). Thus, we can see each \(\sigma_k\) as a chain of jobs, and \(p_{\sigma_k[i]}\) gives the processing time of the \(i\)-th job in \(\mathcal{R}_k\).

At each step, the algorithm computes the average processing time of each prefix of each chain \(\sigma_k\) (\(\texttt{cost}_k(j)\) denotes the average processing time of the prefix of length \(j\) in \(\sigma_k\)), and schedules the jobs of the prefix of minimum cost. We assume that the algorithm always maximizes the number of jobs scheduled at each step, that is, if \(\texttt{best}_{k^*}=\texttt{cost}_{k^*}(j)=\texttt{cost}_{k^*}(j')\) for two distinct positions \(j<j'\), we systematically set \(\texttt{len}_{k^*}=j'\).

\begin{algorithm}[htbp]
    \caption{\textsc{Chain-Opt}}
    \label{alg.chain-opt}
    \begin{algorithmic}[1]
        \Require a confusion matrix \(E\) and a scenario \(\sigma\)
        \Ensure an optimal schedule
        \For{all queues \(\mathcal{R}_k\)}
            \If{\(\mathcal{R}_k=\emptyset\)}
                \State \(\texttt{best}_k\gets +\infty\)
                \State \(\texttt{len}_k\gets -1\)
            \Else
                \State \(\texttt{cost}_k(j)\gets \frac{1}{j}\sum_{i=1}^j p_{\sigma_k[i]}\) for all \(1\le j\le |\mathcal{R}_k|\)
                \State \(\texttt{best}_k\gets \min_{1\le j\le |\mathcal{R}_k|} \texttt{cost}_k(j)\)
                \State \(\texttt{len}_k\gets \argmin_{1\le j\le |\mathcal{R}_k|} \texttt{cost}_k(j)\)
            \EndIf
        \EndFor
        \While{there remain jobs to schedule}
            \State \(k^*\gets\argmin_{1\le k\le K} \texttt{best}_k\)
            \State Pull the first \(\texttt{len}_{k^*}\) jobs of \(\mathcal{R}_{k^*}\)
            \If{\(\mathcal{R}_{k^*}=\emptyset\)}
                \State \(\texttt{best}_{k^*}\gets +\infty\)
                \State \(\texttt{len}_{k^*}\gets -1\)
            \Else
                \State Update \(\texttt{cost}_{k^*}(j)\) for all \(1\le j\le |\mathcal{R}_{k^*}|\)
                \State Update \(\texttt{best}_{k^*}\) and \(\texttt{len}_{k^*}\)
            \EndIf
        \EndWhile
    \end{algorithmic}
\end{algorithm}

\newpage
\section{Optimal Solution of \ref{pb.min-max-alpha} (Theorem~\ref{th.min-max-regret-rand})}
\label{sec.optimal-min-max-alpha}

We recall the \ref{pb.min-max-alpha} problem:
\begin{equation}
    \tag{\textsc{Min-Max-Alpha}}
    \min_{\alpha\in\Gamma}
    \left\{\max\left\{g_{(m_1,0)}(\alpha),g_{(0,m_2)}(\alpha),g_{(m_1,m_2)}(\alpha)\right\}\right\},
\end{equation}
where \(\Gamma=\{(\alpha_2,\alpha_3):\alpha_2+\alpha_3\le 1\}\), \(m_k=\min(s_k,L_k)\), and
\[
    \begin{cases}
        g_{(m_1,0)}(\alpha)=\alpha_2(m_1s_2-s_2L_1)+\alpha_3(m_1L_2-s_2L_1)+s_2L_1, \\
        g_{(0,m_2)}(\alpha)=\alpha_2(m_2L_1-s_2L_1)+\alpha_3(m_2s_1-s_2L_1)+s_2L_1, \\
        g_{(m_1,m_2)}(\alpha)=g_{(m_1,0)}(\alpha)+g_{(0,m_2)}(\alpha)-m_1m_2-s_2L_1(1-\alpha_2-\alpha_3).
    \end{cases}
\]

First, suppose that \(s_1=L_1\) and \(s_2=L_2\). We immediately obtain \(g_{(m_1,0)}(\alpha)=g_{(0,m_2)}(\alpha)=s_2L_1\) and \(g_{(m_1,m_2)}(\alpha)=s_2L_1(\alpha_2+\alpha_3)\le s_2L_1\), thus \(\sum_{i=1}^3 \alpha_i(\tilde{C}^{r_i}_{\hat{\sigma}}-\tilde{C}^*_{\hat{\sigma}})=(\lambda_L-\lambda_s)s_2L_1\). We now assume \(s_1\ne L_1\) or \(s_2\ne L_2\), and we notice that the case \(m_1=s_1\) and \(m_2=L_2\) cannot happen, as we necessarily have \(s_2L_1\le s_1L_2\). This leaves 3 different cases to analyze.

\begin{description}
    \item[Case 1: \(m_1=s_1\) and \(m_2=s_2\).] We get
    \[
        \begin{cases}
            g_{(m_1,0)}(\alpha)=\alpha_2(s_1s_2-s_2L_1)+\alpha_3(s_1L_2-s_2L_1)+s_2L_1, \\
            g_{(0,m_2)}(\alpha)=\alpha_3(s_1s_2-s_2L_1)+s_2L_1, \\
            g_{(m_1,m_2)}(\alpha)=\alpha_2s_1s_2+\alpha_3(s_1L_2+s_1s_2-s_2L_1)+s_2L_1-s_1s_2.
        \end{cases}
    \]
    Observe that \ref{pb.min-max-alpha} can be rewritten as
    \[
        \min_{0\le\alpha_3\le 1}
        \left\{\min_{0\le\alpha_2\le 1-\alpha_3}
        \left\{\max\left\{g_{(m_1,0)}(\alpha_2,\alpha_3),g_{(0,m_2)}(\alpha_3),g_{(m_1,m_2)}(\alpha_2,\alpha_3)\right\}\right\}\right\}.
    \]
    First we solve the inner problem for any fixed \(\alpha_3\). We have that \(g_{(m_1,0)}\) decreases with \(\alpha_2\) (as \(s_1\le L_1\)), \(g_{(0,m_2)}\) is constant with \(\alpha_2\), and \(g_{(m_1,m_2)}\) increases with \(\alpha_2\). This implies that \(\max\left\{g_{(m_1,0)},g_{(m_1,m_2)}\right\}\) is minimized when \(g_{(m_1,0)}(\alpha_2,\alpha_3)=g_{(m_1,m_2)}(\alpha_2,\alpha_3)\), i.e., \(\alpha_2=(1-\alpha_3)\frac{s_1}{L_1}\). In other words,
    \[
        \min_{0\le\alpha_2\le 1-\alpha_3}
        \left\{\max\left\{g_{(m_1,0)}(\alpha_2,\alpha_3),g_{(0,m_2)}(\alpha_3),g_{(m_1,m_2)}(\alpha_2,\alpha_3)\right\}\right\}
        =\max\left\{h(\alpha_3),g_{(0,m_2)}(\alpha_3)\right\},
    \]
    where
    \begin{align*}
        h(\alpha_3)
        &=g_{(m_1,0)}\left((1-\alpha_3)\frac{s_1}{L_1},\alpha_3\right) \\
        &=\alpha_3\left(s_1L_2-s_2L_1-\frac{s_1}{L_1}(s_1s_2-s_2L_1)\right)+s_2L_1+\frac{s_1}{L_1}(s_1s_2-s_2L_1).
    \end{align*}
    The \ref{pb.min-max-alpha} problem becomes
    \[
        \min_{0\le\alpha_3\le 1}
        \left\{\max\left\{h(\alpha_3),g_{(0,m_2)}(\alpha_3)\right\}\right\}.
    \]

    We have \(s_2L_1\le s_1L_2\) and \(s_1\le L_1\), thus \(h\) increases with \(\alpha_3\) and \(g_{(0,m_2)}\) decreases with \(\alpha_3\). Again, this implies that \(\max\left\{h,g_{(0,m_2)}\right\}\) is minimized when \(h(\alpha_3)=g_{(0,m_2)}(\alpha_3)\), i.e.,
    \[
        \alpha_3=\frac{s_2(L_1-s_1)}{L_1L_2-s_1s_2}.
    \]

    We conclude that \(\sum_{i=1}^3 \alpha_i(\tilde{C}^{r_i}_{\hat{\sigma}}-\tilde{C}^*_{\hat{\sigma}})\) is minimized for \(\alpha_1=\frac{(L_1-s_1)(L_2-s_2)}{L_1L_2-s_1s_2}\), \(\alpha_2=\frac{s_1(L_2-s_2)}{L_1L_2-s_1s_2}\) and \(\alpha_3=\frac{s_2(L_1-s_1)}{L_1L_2-s_1s_2}\), and we obtain
    \[
        \E\left[\tilde{C}^{\RH}_{\hat{\sigma}}\right]-\tilde{C}^*_{\hat{\sigma}}
        =(\lambda_L-\lambda_s)\left(s_2L_1-\frac{s_2^2(L_1-s_1)^2}{L_1L_2-s_1s_2}\right).
    \]

    \item[Case 2: \(m_1=L_1\) and \(m_2=L_2\).] We get
    \[
        \begin{cases}
            g_{(m_1,0)}(\alpha)=\alpha_3(L_1L_2-s_2L_1)+s_2L_1, \\
            g_{(0,m_2)}(\alpha)=\alpha_2(L_1L_2-s_2L_1)+\alpha_3(s_1L_2-s_2L_1)+s_2L_1, \\
            g_{(m_1,m_2)}(\alpha)=\alpha_2L_1L_2+\alpha_3(s_1L_2+L_1L_2-s_2L_1)+s_2L_1-L_1L_2.
        \end{cases}
    \]
    With a reasoning analogous to Case 1, we deduce that \(\sum_{i=1}^3 \alpha_i(\tilde{C}^{r_i}_{\hat{\sigma}}-\tilde{C}^*_{\hat{\sigma}})\) is minimized for \(\alpha_1=\frac{(s_1-L_1)(s_2-L_2)}{s_1s_2-L_1L_2}\), \(\alpha_2=\frac{L_2(s_1-L_1)}{s_1s_2-L_1L_2}\) and \(\alpha_3=\frac{L_1(s_2-L_2)}{s_1s_2-L_1L_2}\), and we obtain
    \[
        \E\left[\tilde{C}^{\RH}_{\hat{\sigma}}\right]-\tilde{C}^*_{\hat{\sigma}}
        =(\lambda_L-\lambda_s)\left(s_2L_1-\frac{L_1^2(s_2-L_2)^2}{s_1s_2-L_1L_2}\right).
    \]

    \item[Case 3: \(m_1=L_1\) and \(m_2=s_2\).] We get
    \[
        \begin{cases}
            g_{(m_1,0)}(\alpha)=\alpha_3(L_1L_2-s_2L_1)+s_2L_1, \\
            g_{(0,m_2)}(\alpha)=\alpha_3(s_1s_2-s_2L_1)+s_2L_1, \\
            g_{(m_1,m_2)}(\alpha)=\alpha_2s_2L_1+\alpha_3(L_1L_2+s_1s_2-s_2L_1).
        \end{cases}
    \]
    We immediately see that \(g_{(m_1,m_2)}\) increases with \(\alpha_2\), and the 3 functions all increase with \(\alpha_3\). Thus, \(\sum_{i=1}^3 \alpha_i(\tilde{C}^{r_i}_{\hat{\sigma}}-\tilde{C}^*_{\hat{\sigma}})\) is minimized for \(\alpha_1=1\), \(\alpha_2=0\) and \(\alpha_3=0\), and we obtain
    \[
        \E\left[\tilde{C}^{\RH}_{\hat{\sigma}}\right]-\tilde{C}^*_{\hat{\sigma}}
        =(\lambda_L-\lambda_s)s_2L_1.
    \]
\end{description}

\newpage 

\section{Proof of Theorem~\ref{thm:dp-optimal}\label{appendix:proof_th_dp_opt}}

\noindent\textbf{Theorem~6.} \textit{Let \textsc{DP} be a dynamic programming algorithm that, at each step $E^t$, selects the queue $\mathcal{R}_{k^*}$ that minimizes $f(E^t)$.}

\textit{\textsc{DP} is an optimal adaptive algorithm of complexity $O\left(K\cdot e_{\max}^{K^2}\right)$, where $e_{\max}$ is the maximum entry of the initial confusion matrix $E=E^0$.}
\begin{proof}
For the optimality of \textsc{DP}, the proof proceeds by induction on $E^t$.

\medskip \noindent\textbf{Base case ($E^n$).} The base case $f(E^n)=0$ is immediate since the matrix $E^n$ contains no remaining jobs.

\medskip \noindent\textbf{Inductive step.}
Consider any adaptive algorithm $\mathcal{A}$, deterministic or randomized, and a state $E^t$.
By the inductive hypothesis, we know that for every state $E^i$ with $i>t$, the value $f(E^i)$ is a lower bound on the expected cost of $\mathcal{A}$.
It is sufficient to show that the total cost of $\mathcal{A}$ for $E^t$ is not smaller that $f(E^t)$.

Suppose that $\mathcal{A}$ selects the queue $\mathcal{R}_k$ at state $E^t$.
Conditioned on this choice, the revealed job is of class $\ell$ with probability $\frac{e_{k\ell}^t}{|\mathcal{R}_k^t|}$. 
This incurs an immediate cost $n_t\lambda_\ell$ and transitions to state $E^t- \mathbf{e}^{k\ell}$.
By the induction hypothesis applied to $E^t- \mathbf{e}^{k\ell}$, the expected future cost of $\mathcal{A}$ from that state is at least $f\bigl(E^t - \mathbf{e}^{k\ell}\bigr)$.
Hence the expected total cost of $\mathcal{A}$ is at least
$$\sum_{\ell=1}^K \frac{e_{k\ell}^t}{|\mathcal{R}_k^t|}\cdot\Bigl[n_t\lambda_\ell + f(E^t-\mathbf{e}^{k\ell})\Bigr] 
\geq \sum_{\ell=1}^K \frac{e_{k^*\ell}^t}{|\mathcal{R}_{k^*}^t|}\cdot\Bigl[n_t\lambda_\ell + f(E^t-\mathbf{e}^{k^*\ell})\Bigr] 
= f(E^t) $$
where the inequality follows by theorem's statement.

For the complexity of \textsc{DP}, consider an instance of $n$ jobs defined by a $K\times K$ matrix $E$.
For each entry $e_{k\ell}$ of $E$, the dynamic program needs to examine all admissible values $\{0,1,\cdots, e_{k\ell}\}$. Hence, the number of states generated by \textsc{DP} is at most 
$$\prod_{k,\ell} (e_{k\ell}+1) \leq \prod_{k,\ell} (e_{\max}+1) = \prod_{i=1}^{K^2} (e_{\max}+1) = (e_{\max}+1)^{K^2}$$
Taking into account that, for each state, we need to compute a minimum over $K$ values, the theorem follows.

\qed\end{proof}

\newpage 
\section{Expected-Cost Ratio Between Optimal Non-Adaptive and Adaptive Solutions} \label{appendix:price_non_adapt}

Our goal is to quantify how much the adaptive model can outperform the non-adaptive one.
We introduce the following family of instances.

\medskip\noindent\textbf{Definition 1 (Instances $\mathcal{M}_{K,x}$).} \textit{For any positive integer $K$ and any positive odd integer $x$, the instance $\mathcal{M}_{K,x}$ is composed by the $K\times K$ matrix where all entries of the first column have value $x$ and all entries of the last column have value 1, along with the processing times $\lambda_1=0$ and $\lambda_K=1$.}

\medskip

First, we compute the expected cost of the optimal non-adaptive algorithm \textsc{Min-Cost}.

\medskip\noindent\textbf{Lemma 4.} \textit{For any instance $\mathcal{M}_{K,x}$, we have $\E[\tilde{C}^{\MC}(\mathcal{M}_{K,x})]=\frac{(x+1)}{2}K^2 +\frac{K}{2}$.}
\begin{proof}
For any instance $\mathcal{M}_{K,x}$ and any queue $\mathcal{R}_k$ of the confusion matrix, we have $\bar{p}_k=\frac{1}{x+1}$. Thus, we can immediately compute the expected objective of \textsc{Min-Cost}:
\begin{align*}
    \E[\tilde{C}^{\MC}(\mathcal{M}_{K,x})]
        &=n\E\!\left[p_{[1]}\!\right]+(n\!-\!1)\E\!\left[p_{[2]}\!\right]+\cdots+\E\!\left[p_{[n]}\!\right] \\
        &=\frac{1}{x+1}\!\sum_{i=1}^{n}\!i \\
        &=\frac{1}{x+1}\cdot\frac{n(n\!+\!1)}{2} \\
        &=\frac{(x+1)}{2}K^2 +\frac{K}{2},
\end{align*}
where $\E[p_{[j]}]$ denotes the expected processing time of the job that is in position $j$. The last equality follows by noticing that there are $K(x+1)$ jobs in the instance.
\end{proof}

Since getting a closed formula for the expected cost of the optimal adaptive \textsc{DP} algorithm is hard (due to its recursive nature), the second ingredient of our analysis is the following non-optimal adaptive heuristic applied on any instance $\mathcal{M}_{K,x}$. By the optimality of \textsc{DP}, its expected cost is no bigger than the expected cost of this heuristic.

\medskip\noindent\textbf{Definition 2 (\textsc{Half-and-Check} (\HC) policy).} \textit{Let $\mathcal{L}$ be a list of jobs (initially empty), and let $\mathcal{M}_{K,x}$ be an instance as described in Definition~\ref{definition:familyM}. For each queue $\mathcal{R}_1,\mathcal{R}_2,\ldots,\mathcal{R}_K$ do:
\begin{enumerate}
   \item Execute the first $r=\frac{x+1}{2}$ jobs of current queue. We call these jobs the first \emph{half-block} of the queue.
   \item If the large job was observed in this first half-block, execute the remaining $r$ jobs of the current queue (second half-block) immediately, then proceed to the next queue.
   \item Otherwise, add the second half-block of current queue at the beginning of $\mathcal{L}$, and proceed to the next queue.
\end{enumerate}
Finally, execute the half-blocks stored in $\mathcal{L}$ respecting their order in the list.}

\medskip

To analyse \HC, it is convenient not to reason in terms of individual jobs, but in terms of the \emph{blocks} of jobs that \HC manipulates as atomic units, and to track, for each block, the cost it contributes internally, as well as the cost it imposes on the blocks scheduled after it.

\medskip\noindent\textbf{Block notation.} We introduce the following three atomic schedule units:
\begin{itemize}
  \item $\mathbf{b_s}$: a half-block of size $r = \frac{x+1}{2}$ containing
    only small jobs (with processing time 0). 
  \item $\mathbf{b_L}$: a half-block of size $r = \frac{x+1}{2}$ containing
    one large job (with processing time 1) and $r-1 = \frac{x-1}{2}$ small jobs.
    The large job occupies a uniformly random position within the block.
  \item $\mathcal{R}_k$: the full queue $\mathcal{R}_k$ (of size $x+1$), containing
    one large job and $x$ small jobs. The large job occupies a uniformly random
    position within the block.
\end{itemize}
We first quantify the cost each block type contributes to its own jobs, which we call its \emph{internal cost} $\delta$: this is the contribution to $\sum C_j$ coming only from delays that jobs of the block impose to the other jobs of the same block. Once a block's internal cost is known, the block can be treated as an opaque unit whose only remaining effect is to delay the jobs scheduled after it. This is precisely the decomposition used in the proof of Lemma~5 below.

\begin{lemma}[Internal costs]
\label{lem:internal} The internal costs of each block type are $\delta(\mathbf{b_s}) = 0$, $\delta(\mathbf{b_L}) = \frac{x+3}{4}$, and $\delta(\mathcal{R}_k) = \frac{x+2}{2}$.
\end{lemma}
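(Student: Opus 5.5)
The plan is to compute each internal cost directly from its definition, using the fact that in $\mathcal{M}_{K,x}$ only two processing times occur, $\lambda_1=0$ and $\lambda_K=1$. I read the internal cost $\delta$ of a block as the expected value of $\sum_{j\in\text{block}}(\text{completion time of } j \text{ measured from the block's start})$. Equivalently, in the form $\tilde{C}=\sum_j (n-j+1)p_{\pi[j]}$ restricted to the block, a job in position $i$ of a block of size $m$ contributes $(m-i+1)\,p$. This counts its own processing and the delay it causes to the $m-i$ later jobs of the same block. Delays to jobs after the block are deliberately excluded, since they are accounted for separately in the proof of Lemma~5.

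\textbf{Step 1 (block $\mathbf{b_s}$).} All $r$ jobs have processing time $0$, so every term $(m-i+1)\cdot 0$ vanishes. Hence $\delta(\mathbf{b_s})=0$.

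\textbf{Step 2 (block $\mathbf{b_L}$).} The block has size $m=r=\frac{x+1}{2}$ and contains exactly one job of processing time $1$. That job sits at a uniformly random position $i\in\{1,\dots,r\}$, so its contribution is $r-i+1$. All other terms are zero. Taking the expectation gives $\frac{1}{r}\sum_{i=1}^{r}(r-i+1)=\frac{r+1}{2}$. Substituting $r=\frac{x+1}{2}$ yields $\frac{x+3}{4}$.

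\textbf{Step 3 (block $\mathcal{R}_k$).} The same computation applies with $m=x+1$, one large job at a uniform position, and $x$ zero-length jobs. The expected contribution is $\frac{m+1}{2}=\frac{x+2}{2}$.

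\textbf{Main obstacle.} The computations themselves are routine. The only real care point is justifying that the large job's position is uniform in each block as it is actually used by \HC.

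For $\mathcal{R}_k$ and for the first half-block, uniformity follows directly from equiprobable scenarios: each permutation of $\mathcal{R}_k$ is equally likely, so the large job is uniform among the $x+1$ positions.

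For a half-block of type $\mathbf{b_L}$, I will condition on which half contains the large job. Conditioned on the large job lying in a given half (probability $\frac{1}{2}$ each, since $x+1=2r$), its position within that half is uniform over the $r$ slots. This holds whether the $\mathbf{b_L}$ is the first half-block, executed immediately, or the second half-block, stored in $\mathcal{L}$.

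Because only one nonzero-length job exists per block, linearity of expectation over that single job's position suffices, and no interaction terms arise.
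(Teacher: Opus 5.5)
Your proposal is correct and matches the paper's proof: a single unit-length job at a uniformly random position $i$ in a block of size $m$ contributes $m-i+1$, with expectation $\frac{m+1}{2}$, which gives $\frac{x+3}{4}$ for $m=r$ and $\frac{x+2}{2}$ for $m=x+1$. Your conditioning argument for uniformity inside a half-block is a useful justification of something the paper simply states as part of its block notation.
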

\begin{proof}
Since $\lambda_1 = 0$, the completion time of any job equals the number of large jobs that precede it (including itself). The large job in $\mathbf{b_L}$ occupies position $j \in \{1, \ldots, r\}$ uniformly. It contributes $1$ to its own $C_j$, and delays the $r - j$ small jobs after it, each by $1$. Hence it contributes $r - j + 1$ to $\sum C_j$ in case $j$. Taking the expectation:
\[
    \delta(\mathbf{b_L}) 	= \frac{1}{r} \sum_{j=1}^{r} (r - j + 1)
  		= \frac{1}{r} \cdot \frac{r(r+1)}{2} = \frac{r+1}{2}
  		= \frac{\frac{x+1}{2} + 1}{2} = \frac{x+3}{4}.
 \]
The same calculation with $r' = x+1$ gives $\delta(\mathcal{R}_k) = \frac{x+2}{2}$.
\end{proof}

We now have all ingredients to prove Lemma~5. The key observation is that the terminal queue $\mathcal{R}_K$ is always executed as an uninterrupted block by \HC, regardless of where the large job falls within the earlier queues, i.e., for $\mathcal{R}_K$, either its $\mathbf{b_s}$ half-block is executed first and its $\mathbf{b_L}$ half-block is placed at the head of $\mathcal{L}$, or its $\mathbf{b_L}$ half-block is executed first and its $\mathbf{b_s}$ half-block follows immediately. We exploit this structural invariant to express $\E[\tilde{C}^\HC(\mathcal{M}_{K,x})]$ recursively, by rewriting the execution of a single terminal queue $\mathcal{R}_K$ as the execution of two consecutive queues $\mathcal{R}_K$ and $\mathcal{R}_{K+1}$, consistent with how \HC schedules them. This requires tracking, for each instance size, the expected number of jobs executed after $\mathcal{R}_K$ and the expected number of large jobs executed before it.

\medskip\noindent\textbf{Lemma 5.} \textit{For any instance $\mathcal{M}_{K,x}$, we have
\[
    \E\left[\tilde{C}^{\HC}(\mathcal{M}_{K,x})\right] 
    =\begin{cases}
        \frac{15}{4}x + \frac{21}{4} & \text{if } K=3, \\
        \E\left[\tilde{C}^\HC(\mathcal{M}_{K-1,x})\right] + \frac{3}{4}(K-1)(x+1)+\frac{x}{2}+1 & \text{otherwise.}
    \end{cases}
\]}
\begin{proof}
We prove the lemma by induction on $K$.

\medskip\noindent\textbf{Base case $\mathcal{M}_{3,x}$}. 
Each queue $\mathcal{R}_1$ and $\mathcal{R}_2$ independently has its large job in either the first or the second half-block, yielding four equally probable branches (each occurring with probability $1/4$):
\begin{center}
\renewcommand{\arraystretch}{1.3}
\small
\begin{tabular}{|l|l|l|l|}
\hline
$\mathcal{R}_1$ & $\mathcal{R}_2$ & Schedule order & Cost\\
\hline
Large in first half & Large in first half & $\mathbf{b_L},\mathbf{b_s},\mathbf{b_L},\mathbf{b_s},\mathcal{R}_3$ & $\delta(\mathbf{b_L})+5r+\delta(\mathbf{b_L})+3r+\delta(\mathcal{R}_3)$\\
Large in first half & Large in second half & $\mathbf{b_L},\mathbf{b_s},\mathbf{b_s},\mathcal{R}_3,\mathbf{b_L}$ & $\delta(\mathbf{b_L})+5r+\delta(\mathcal{R}_3)+1r+\delta(\mathbf{b_L})$\\
Large in second half & Large in first half & $\mathbf{b_s},\mathbf{b_L},\mathbf{b_s},\mathcal{R}_3,\mathbf{b_L}$ & $\delta(\mathbf{b_L})+4r+\delta(\mathcal{R}_3)+1r+\delta(\mathbf{b_L})$\\
Large in second half & Large in second half & $\mathbf{b_s},\mathbf{b_s},\mathcal{R}_3,\mathbf{b_L},\mathbf{b_L}$ & $\delta(\mathcal{R}_3)+2r+\delta(\mathbf{b_L})+1r+\delta(\mathbf{b_L})$\\
\hline
\end{tabular}
\end{center}
Averaging over the four branches and substituting $\delta(\mathbf{b_L})=\frac{x+3}{4}$,  $\delta(\mathcal{R}_3)=\frac{x+2}{2}$ and $r=\frac{x+1}{2}$ yields
\[
    \E\left[\tilde{C}^{\HC}(\mathcal{M}_{3,x})\right] = \frac{15}{4}x + \frac{21}{4}.
\]
We now establish the recurrence relating  $\mathcal{M}_{K+1}$ to  $\mathcal{M}_{K}$ for $K\geq3$.

\medskip\noindent\textbf{Step 1: Decomposition of \HC on $\mathcal{M}_{K,x}$.} The cost of \HC on  $\mathcal{M}_{K,x}$ can be written as
\begin{equation*}\label{eq:HCm_decomp}
    \E[\tilde{C}^\HC(\mathcal{M}_{K,x})] = \E[\tilde{C}^\HC(\mathcal{M}_{K,x})]^* + \delta(\mathcal{R}_K) + \E[A_K] + \E[B_K],
\end{equation*}
where $\E[\tilde{C}^\HC(\mathcal{M}_{K,x})]^*$ denotes the expected cost of all blocks \emph{excluding} $\mathcal{R}_K$ and its external effect:
\begin{itemize}
    \item $\delta(\mathcal{R}_K)$ is the internal cost of $\mathcal{R}_K$ (Lemma~\ref{lem:internal}),
    \item $\E[A_K]$ is the expected external cost imposed by $\mathcal{R}_K$ on the jobs that scheduled \emph{after} it (i.e., the expected number of jobs after $\mathcal{R}_K$, each delayed by exactly $\lambda_K$),
    \item $\E[B_K]$ is the expected external cost imposed on $\mathcal{R}_K$ by the jobs scheduled \emph{before} it (i.e., the expected number of large jobs executed before $\mathcal{R}_K$, each delaying $\mathcal{R}_K$ by exactly $\lambda_K$).
\end{itemize}
For each queue $\mathcal{R}_h$ with $h\in \{1,\ldots,K-1\}$, the large job is equally likely to fall in either half-block. If it falls in the second half-block (probability $1/2$), that half-block, containing $r$ jobs, is deferred after $\mathcal{R}_K$. Otherwise, nothing is deferred: since the $\mathbf{b_L}$ block is executed first, the $\mathbf{b_s}$ block of the same queue follows immediately. As queues are independent, the expected total number of deferred jobs is $\E[A_K]=(K-1)\frac{r}{2}$.

Every job executed after $\mathcal{R}_K$ belongs to one such deferred half-block, and each deferred half-block contains exactly one large job, hence $\frac{\E[A_K]}{r}$ of these jobs are large. The expected number of large jobs executed before $\mathcal{R}_K$ is therefore $K-\frac{\E[A_K]}{r}-1 = \frac{K-1}{2}$, and each of them delays the $(x+1)$ jobs of $\mathcal{R}_K$ by exactly $\lambda_K$, giving $\E[B_K]=(x+1)\frac{K-1}{2}$.

\medskip\noindent\textbf{Step 2: Substitution of $\mathcal{R}_K$ execution by $\mathcal{R}_K$ and $\mathcal{R}_{K+1}$ executions.} When a new queue $\mathcal{R}_{K+1}$ is appended to $\mathcal{M}_K$ to form $\mathcal{M}_{K+1}$, $\mathcal{R}_K$ is no longer executed as a single terminal block. Instead, $\mathcal{R}_K$ is processed by \HC like all other non-terminal queues: its first half-block is executed, and depending on whether the long job is found there, its second half-block either follows immediately (case $\mathbf{b_L}$) or is deferred (case $\mathbf{b_s}$). The new terminal block is $\mathcal{R}_{K+1}$, inserted at the position determined by the decision on queue $\mathcal{R}_K$.

Each existing branch of the $K$-queue tree (occurring with its original probability) now splits into two equally likely sub-branches:
\begin{itemize}
  \item \textbf{Case $\mathbf{b_L}$ (probability $1/2$):} the large job of $\mathcal{R}_K$ is in its first half-block. The schedule
  \[
    [\ldots], \mathcal{R}_K, [\text{\ldots}]
  \]
  within this branch becomes
  \[
    [\ldots], \mathbf{b_L}, \mathbf{b_s}, \mathcal{R}_{K+1}, [\text{\ldots}]
  \]
  The block $\mathcal{R}_{K+1}$ is followed by $A_K$ deferred jobs (same as before, since no new deferral is added).
  
  \item \textbf{Case $\mathbf{b_s}$ (probability $1/2$):} the large job of $\mathcal{R}_K$ is in its second half-block. The second half-block $\mathbf{b_L}$ is deferred. The schedule
  \[
    [\ldots], \mathcal{R}_{K}, [\text{\ldots}]
  \]
  becomes
  \[
    [\ldots], \mathbf{b_s}, \mathcal{R}_{K+1}, [\text{deferred } \mathbf{b_L}], [\dots]
  \]
  Hence, $\mathcal{R}_{K+1}$ is followed by $A_K + r$ deferred jobs.
\end{itemize}

\medskip\noindent\textbf{Step 3: Marginal cost of adding queue $\mathcal{R}_{K+1}$.} Both $\mathcal{R}_{K}$ and $\mathcal{R}_{K+1}$ are now delayed by all large tasks executed before the original execution of $\mathcal{R}_{K}$, which gives
\begin{align*}\label{eq:Gamma_raw}
    \E[\tilde{C}^\HC(\mathcal{M}_{K+1,x})]&-\E[\tilde{C}^\HC(\mathcal{M}_{K,x})]
    = \underbrace{-\delta(\mathcal{R}_K)-\E[A_K]-\E[B_K]}_{\text{remove }\mathcal{R}_K} \\
    & + \underbrace{\frac{1}{2}\left(\delta(b_L) + \left(3r+\E[A_K]\right) + \delta(b_S) + 0 + \delta(\mathcal{R}_{K+1}) + \E[A_K]+2\E[B_{K}]\right)}_{\text{case }\mathbf{b_L}\text{, proba. }1/2} \\
    & + \underbrace{\frac{1}{2}\left(\delta(b_S) + 0 + \delta(\mathcal{R}_{K+1}) + \left(r+\E[A_K]\right) + \delta(b_L) + \E[A_K]+2\E[B_{K}]\right)}_{\text{case }\mathbf{b_s}\text{, proba. }1/2}.
\end{align*}

\noindent Since $\delta(\mathcal{R}_K)=\delta(\mathcal{R}_{K+1})$, this simplifies to

\begin{align*}
\E[\tilde{C}^\HC(\mathcal{M}_{K+1,x})]-\E[\tilde{C}^\HC(\mathcal{M}_{K,x})] 
& = \delta(b_L)+2r+\E[A_K]+\E[B_K] \\
& = \frac{3}{4}K(x+1) + \frac{x}{2} + 1,
\end{align*}
which, after re-indexing $K+1 \to K$, is exactly the stated recurrence.
\end{proof}

By combining Lemmas~4 and~5, we get the following theorem.

\medskip\noindent\textbf{Theorem 7.} \textit{The expected-cost ratio between optimal non-adaptive and optimal adaptive solutions is asymptotically bigger than $\frac{4K}{3K+1}$.}
\begin{proof}
By solving the recurrence of Lemma~5, we get
\begin{equation*}
    \E\left[\tilde{C}^{\HC}(\mathcal{M}_{K,x})\right]
    = \frac{3}{4}\left(\frac{x+1}{2}\right)K^2 + \frac{x+5}{8}K.
\end{equation*}
Hence, for the expected-cost ratio of the two stated solutions we have
\begin{align*}
    \frac{\E\left[\tilde{C}^{\MC}(\mathcal{M}_{K,x})\right]}{\E\left[\tilde{C}^{\textsc{DP}}(\mathcal{M}_{K,x})\right]}
    &\geq\frac{\E\left[\tilde{C}^{\MC}(\mathcal{M}_{K,x})\right]}{\E\left[\tilde{C}^{\HC}(\mathcal{M}_{K,x})\right]}\\
    &=\frac{\frac{x+1}{2}K^2 + \frac{K}{2}}{\frac{3}{8}(x+1) K^2 + \frac{x+5}{8}K} \\
    &=\frac{4(x+1)K^2 + 4K}{3(x+1) K^2 + (x+5)K} \\
    &=\frac{4(x+1)K + 4}{3(x+1) K + x+5} \\
    &=\frac{4Kx + 4K + 4}{(3K+1)x + 3K + 5} \\
\end{align*}
By choosing a big enough value for $x$, the theorem follows.
\end{proof}

\end{document}